\declaretheoremstyle[
  bodyfont=\normalfont\itshape,
  headformat=\NAME\NUMBER  
]{nospacetheorem}
\declaretheorem[style=nospacetheorem,name=P]{property}
\newcommand{\reffig}[1]{Fig.~\ref{fig:#1}}
\newcommand{\refsubsec}[1]{Subsection~\ref{subsec:#1}}
\newcommand{\refeq}[1]{(\ref{eq:#1})}
\newcommand{\refprop}[1]{P\ref{prop:#1}}
\def\trans{^{\mathsf{T}}}
\def\R{\mathbb{R}}
\def\F{\mathcal{F}}
\newtheorem{theorem}{Theorem}
\begin{document}

\title{Coordinated Motion Control of a Wire Arc\\ Additive Manufacturing Robotic System\\ for Multi-directional Building Parts}
\author{Fernando~Coutinho,
        Nicolas~Lizarralde~
        and~Fernando~Lizarralde,~\IEEEmembership{Senior Member,~IEEE}
\thanks{This work was supported in part by Shell Brasil Petróleo Ltda, Empresa Brasileira de Pesquisa e Inovação Industrial (Embrapii) and Coordenação de Aperfeiçoamento de Pessoal de Nível Superior – Brasil (CAPES) – Finance Code 001.}
\thanks{F. Coutinho is with the Faculty of Electrical Engineering, Federal University of South and Southeast of Pará, Brazil, e-mail: fernando.coutinho@unifesspa.edu.br.}
\thanks{N. Lizarralde is with the Department of Electrical, Computer, and
Systems Engineering, Rensselaer Polytechnic Institute, Troy, NY 12180
USA.}
\thanks{F. Lizarralde is with Department of Electrical Engineering, COPPE, Federal University of Rio de Janeiro, Brazil.}
}



\maketitle

\begin{abstract}
This work investigates the manufacturing of complex shapes parts with wire arc additive manufacturing (WAAM). In order to guarantee the integrity and quality of each deposited layer that composes the final piece, the deposition process is usually carried out in a flat position. However, for complex geometry parts with non-flat surfaces, this strategy causes unsupported overhangs and staircase effect, which contribute to a poor surface finishing.
Generally, the build direction is not constant for every deposited section or layer in complex geometry parts. As a result, there is an additional concern to ensure the build direction is aligned with gravity, thus improving the quality of the final part.
This paper proposes an algorithm to control the torch motion with respect to a deposition substrate as well as the torch orientation with respect to an inertial frame. The control scheme is based on task augmentation applied to an extended kinematic chain composed by two robots, which constitutes a coordinated control problem, and allows the deposition trajectory to be planned with respect to  the deposition substrate coordinate frame while aligning each layer buildup direction with gravity (or any other direction defined for an inertial frame). Parts with complex geometry aspects have been produced in a WAAM cell composed by two robots (a manipulator with a welding torch and a positioning table holding the workpiece) in order to validate the proposed approach.
\end{abstract}

\def\abstractname{Note to Practitioners}
\begin{abstract}
This work addresses the technical constraints of Wire Arc Additive Manufacturing (WAAM) regarding non-planar geometries and variable build directions. To mitigate issues like unsupported overhangs and the staircase effect, we present a closed-loop coordinated control algorithm for dual-robot systems, where a manipulator guides the torch and a secondary robot repositions the substrate. The practical advantage lies in the task-augmentation scheme that automates gravity-aligned deposition, ensuring the melt pool remains stable and flat throughout the process. This eliminates the need for complex support structures and minimizes post-processing. By synchronizing both robots, the algorithm allows operators to plan trajectories directly on the workpiece frame while the controller autonomously maintains optimal torch orientation relative to an inertial frame, significantly enhancing surface quality and operational efficiency in complex part production.
\end{abstract}

\begin{IEEEkeywords}
Wire Arc Additive Manufacturing, WAAM, Coordinated motion control, Task augmentation, Constrained Jacobian
\end{IEEEkeywords}


\section{Introduction}
\label{sec:introduction}
Metal additive manufacturing (MAM) is the process of producing near-net-shaped metallic parts layer-by-layer from a computer-aided design (CAD) model while maintaining predetermined internal characteristics \cite{Gibson2010}. 

Although MAM cannot be considered a novel method for manufacturing parts, since the first experiments with this technology date from the early 1980s, when it was first termed rapid prototyping \cite{Kumar2021}, it has recently attracted interest from industry and academia due to its high capability to manufacture complex structural parts that are difficult or even not possible to manufacture using conventional manufacturing techniques \cite{Kumar2021,Huang2012}. 

The parts produced with MAM are printed in layers using consumable materials, mostly metal powder or metal wire feedstock, hence, MAM can save time and material in the fabrication of parts with complex shapes if compared to conventional subtractive manufacturing technology, which starts with an oversized raw block and removes unwanted material \cite{Zhao2018NonplanarSA}. Furthermore, MAM processes usually have the flexibility to produce a numerous variety of parts using the same set of equipment and feedstock, which attracts the interest of all kinds of industries, specially those located in isolated environments with difficult access to transport spare parts (e.g., mining, oil, gas).

Among the available MAM technologies described in the ASTM F2792 standard, wire arc additive manufacturing (WAAM) stands out due to its potential capability to produce large parts with distinct geometry without the necessity of specific tools. WAAM is the combination of wire feedstock with an electric arc as a heat source, all guided by a robotic system to enable the buildup of a designed part. It uses standard robotic welding hardware, such as welding power source, torch, wire feeding system, positioning table, robot arm, and others common welding equipment \cite{Williams2016}, which makes it less expensive and more adaptable for existing production lines.

Complex geometry parts for which build direction is not constant remain a challenge for WAAM due to the unsupported overhang sections, in which the molten material may drip and spread to undesired areas. An example of a multi-directional building part (curved pipe) produced with WAAM is shown in \reffig{curved-pipe}.
\begin{figure}[ht!]
    \centering
    \includegraphics[width=0.5\columnwidth,trim={0 5cm 0 15cm},clip]{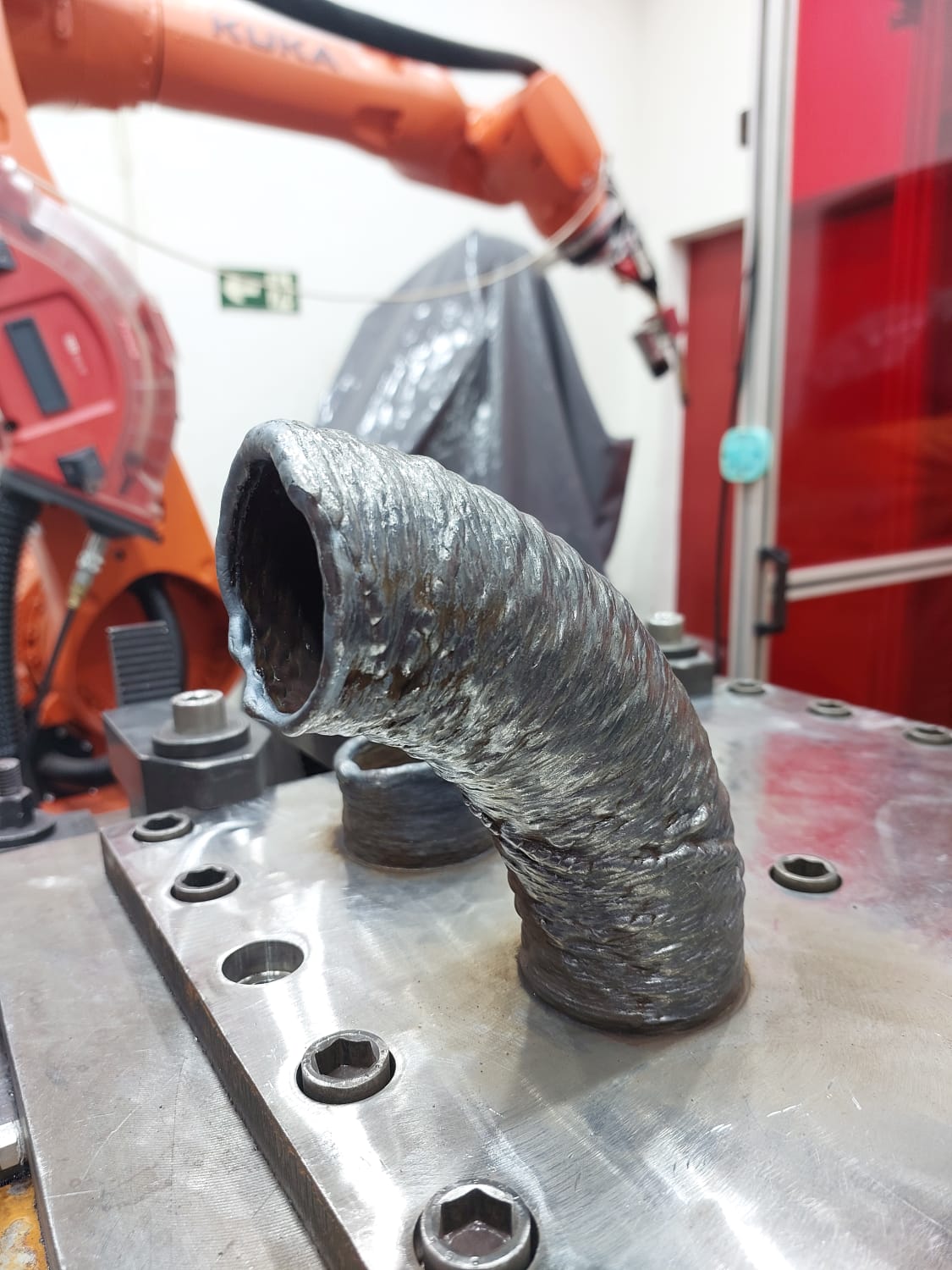}
    \caption{Multi-directional building example.}
    \label{fig:curved-pipe}
\end{figure}
Many researches investigate the possibility of producing complex parts using WAAM by a variety of techniques such as depositing using support structures to deal with overhang sections, changing the torch orientation while keeping the deposition substrate parallel to the ground \cite{Yuan2020}.

The use of support structures leads to an increase of waste and cost of post-processing \cite{Ding2016} and leads to a non-optimal manufacturing time. Changing only the torch direction with respect to the substrate produces irregular melting pool geometry, which leads to changes in the morphology of the deposited bead \cite{Hao2019} leading to a stair-stepping effect on the part surface and degrades the part’s volumetric accuracy \cite{Bhat2022}.

Conventional additive manufacturing systems are generally limited to three degrees of freedom (DOF) with fixed nozzle orientation \cite{Jiang2020}. For WAAM, the most common system utilized to carry the welding power source torch is the standard 6-DOF manipulator, mostly because the majority of power sources and industrial manipulators manufacturers already provide fully integrated solutions between these hardware as they are commonly used for industrial welding tasks. For this setup, the building of parts with unsupported overhangs, depending on the inclined or curved section of the structure, may not be achievable by WAAM as gravity affects the morphology of the molten pool \cite{Gibson2022}.

In recent years, many researchers have turned their efforts to additive manufacturing technologies with redundant robotic systems. The extra DOFs available on these systems are useful to improve the part strength and the surface quality or, as it is the case for WAAM, produce more complex parts \cite{Jiang2020,Lehman2022}.

In order to produce complex geometry parts and reduce production time, \cite{Dai2022} proposed similar offline planning algorithms for manufacturing specific rounded metal additive manufacturing parts parts with a system that coupled a 6-axis robot manipulator with a 2-axis positioning table. These methods consist of planning the deposition trajectory while using an external-axis system in order to maintain the molten pool orientation with respect to gravity. 

For curved parts offline solutions \cite{Ding2017,Dai2022,Gibson2022}, the part was either fabricated in the center of the deposition surface, aligned with the axis of rotation of the positioning table, or the substrate (e.g. metal cylinder) was centered to facilitate the part manufacturing and trajectory planning. 

A general limitation of offline planning methods is that although they allow parameters correction before each layer deposition, they cannot make trajectory or deposition parameters corrections during the deposition. For instance, the deposition bead/layer geometry could be monitored and estimated by sensors and an online closed loop control could adjust the deposition parameters (arc current, wire feed speed), deposition path and tool center point (TCP) travel speed \cite{Marcotte2025,Fang2024}.

For a WAAM system composed by two robots, which together constitute a single redundant kinematic chain, it is possible to control the motions of both the workpiece and the torch, in order to decouple the orientation between the torch and the workpiece and the orientation between the torch and an inertial frame (e.g. gravity). This strategy allows full closed loop control, which could be used for parameters corrections during deposition and facilitates the deposition trajectory planning, since the trajectory can be designed directly on the deposition substrate while keeping the building direction aligned with both the deposition orientation and an inertial frame.

In the present study, the deposition trajectory with respect to the deposition frame (positioning table workpiece) and the TCP orientation with respect to an inertial frame are treated as one single trajectory reference in order to maintain the building direction aligned with gravity during each layer deposition. Which means that, for the additive manufacturing of complex shaped parts structures with unsupported overhangs is treated as a multi robot coordinated motion problem by defining an augmented task vector which considers both the deposition trajectory and the deposition alignment with an inertial frame, solved by using the augmented Jacobian approach.

The augmented Jacobian method presents a more general approach alternative to the task-priority method applied in WAAM systems presented in \cite{Lizarralde2022}. An advantage of the augmented Jacobian method over the task priority method is its ability to utilize well-established techniques, such as damped least squares inverse with numerical filtering \cite{chiaverini}, to handle depositions even when algorithmic singularities occur, that is, when the algorithm faces conflicting tasks. In the proposed solution, a common algorithmic singularity occurs when the deposition frame and the inertial frame align their $z$ axes, causing the augmented Jacobian to lose rank.

The similarities between the augmented task algorithm and the constrained Jacobian method \cite{Pham2015} are also stated for better understanding of algorithm singularities and the asymptotic stability via Lyapunov theory is presented considering internal controllers unmodeled dynamics. 

Experimental results obtained with an integrated robotic system composed by a robot manipulator and a positioning table producing CMT-WAAM multi-directional building parts are presented to show the feasibility of the proposed method.

\section{Problem overview and system description}
\label{sec:system-description}
For manufacturing parts with inclined or curved sections using WAAM, each layer build direction may be different from the previous ones. Subsequently, each layer build direction is misaligned from the gravity vector, causing unsupported overhang sections which may lead to molten metal dripping or spreading to undesired areas, or leading to stair-stepping effect on the part surface.

During the manufacturing of complex geometry parts, ideally, the layer build direction, the deposition direction and gravity are aligned, producing parts with better surface quality, leading to less deposition defects and  reducing the need for surface finishing. 

Thus, in order to increase the range of structures to be produced with WAAM it is desirable to realign the molten material normal with gravity during each layer deposition, which can be achieved by using another robot (e.g. positioning table) to reposition the workpiece while the manipulator carries the welding torch to perform the deposition. \reffig{build-direction} shows a representation of these vectors for the manufacturing of a curved part and the repositioning of the workpiece for a $n$-th layer. 
\begin{figure}[ht!]
    \centering
    \includegraphics[width=0.98\columnwidth]{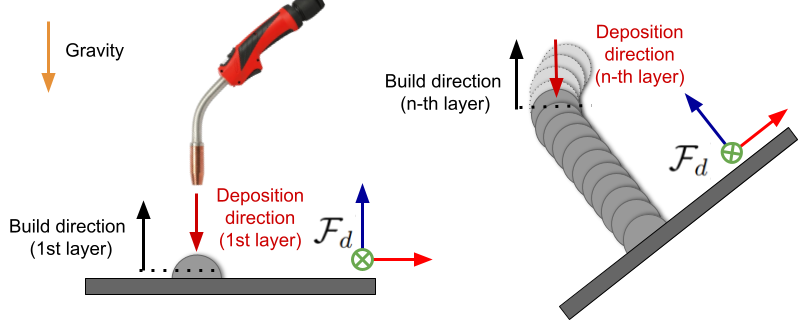}
    \caption{Build direction schematic.}
    \label{fig:build-direction}
\end{figure}

The realignment of the part building direction during the manufacturing process consists of a coordinated control motion problem between two robots. 
In this work, the robotic arm and positioning table are considered as a single $m$-DOF redundant kinematic chain. In this case, the kinematic chain begins in the workpiece surface, represented by the Deposition Frame ($\mathcal{F}_d$), goes through the table and the manipulator kinematic chain, considering the displacement between the Table Base Frame ($\mathcal{F}_{tb}$) and the Arm Base Frame ($\mathcal{F}_{ab}$), and ends up in the Torch tip Frame ($\mathcal{F}_{t}$). A schematic of the robotic WAAM cell with its frames is shown in \reffig{cell-drawing}.
\begin{figure}[ht!]
    \centering
    \includegraphics[width=0.7\columnwidth]{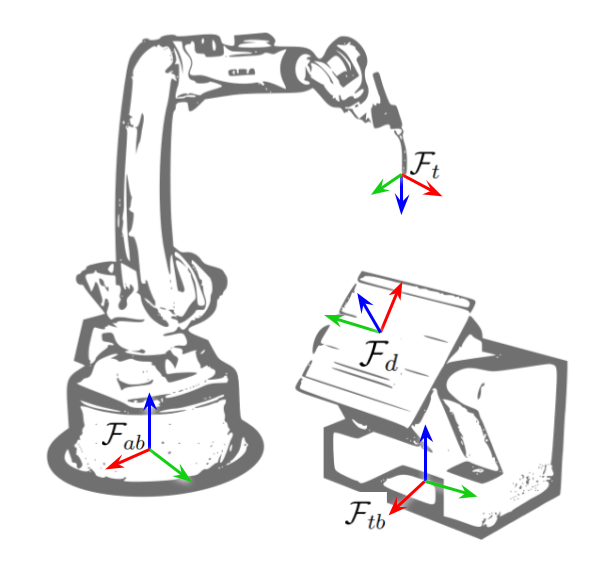}
    \caption{WAAM cell schematic}
    \label{fig:cell-drawing}
\end{figure}

Therefore, the objective is to control the complete robotic system in order to perform a deposition task, where the arm TCP, defined at the tip of the welding torch, follows a desired trajectory defined at $\mathcal{F}_d$ while the build direction which, for that matter, is the same as the deposition direction, is kept aligned with gravity to improve the surface finishing and the overall part quality. Considering that the deposition itself is a task defined for a 6-DOF operational space, any additional task justifies the utilization of a redundant robotic system.

The first task is defined as trajectory tracking problem, where the TCP pose $\bm{x}$ has to follow a desired trajectory $\bm{x}_{(d)}$ defined at $\mathcal{F}_d$. The TCP pose is defined by the forward kinematics of the full redundant system is defined as \cite{siciliano2011robotics}:
\begin{equation}
    \bm{x} = \begin{bmatrix}
    \bm{p} \\ q
    \end{bmatrix} =
    k(\bm{\theta})
    \label{eq:fk}
\end{equation}
where $\bm{x}$ is composed by the TCP position $\bm{p} \in \R^3$ and orientation $R \in SO(3)$, represented in its unit quaternion form, 
$q  = [\eta \; \; \bm{\epsilon}\trans]\trans \in \R^4$ with respect to $\mathcal{F}_d$, where $\eta \in \R$ and $\bm{\epsilon} \in \R^3$ are the scalar and vector parts of the quaternion \cite{siciliano2011robotics}. The joint position vector 
$\bm{\theta} = [\bm{\theta}_{t}\trans \; \; \bm{\theta}_{a}\trans]\trans \in \R^m$ consists of the joint positions of both the positioning table $\bm{\theta}_{t} \in \R^{m-n}$ and manipulator $\bm{\theta}_{a} \in \R^{n}$.

Still, the deposition direction has to be kept aligned to a desired direction (e.g. gravity) represented in an inertial frame (e.g. $\F_{ab}$). Defining the forward kinematics considering only the manipulator as:
\begin{align}
    \bm{x}_{a} &= \begin{bmatrix}
    \bm{p}_a\\q_a
    \end{bmatrix}=k_a(\bm{\theta}_{a}) \label{eq:defFwdKinArm}
\end{align}
where $\bm{x}_a$ is the TCP pose, $\bm{p}_a \in \R^3$ is the position, $q_a  = [\eta_a \; \; \bm{\epsilon}_a\trans]\trans$ is the unit quaternion representing the orientation of the TCP with respect to $\mathcal{F}_{ab}$. 

Thus, the second task is defined as an orientation tracking problem where the TCP orientation $z$-axis has to follow a desired orientation $z$-axis defined in $\F_{ab}$.

\section{Differential kinematics model}
In order to design the control algorithm, first the differential kinematic model of both the manipulator and the complete redundant robotic system is derived. The redundant system Jacobian $J(\bm{\theta}) \in \R^{6 \times m}$ maps the joint velocities $\dot{\bm{\theta}}$ to TCP velocities vector $\bm{v}$ as:
\begin{align}
    \bm{v} &= \begin{bmatrix}
    \bm{\dot{p}} \\ \bm{\omega}
    \end{bmatrix}= 
    J(\bm{\theta})\ \dot{\bm{\theta}}
    \label{eq:defDiffKin}
\end{align}
where the TCP velocities vector $\bm{v}$ is composed of its linear $\bm{\dot{p}}\in \R^3$ and angular $\bm{\omega} \in \R^3$ velocities with respect to $\mathcal{F}_d$ represented in $\mathcal{F}_d$. This relationship can also be expressed by partitioning the Jacobian matrix $J(\bm{\theta})$ into column blocks corresponding to each robot joints:
\begin{align}
    \bm{v} &= 
    \begin{bmatrix}
    J_1(\bm{\theta}) && J_2(\bm{\theta})
    \end{bmatrix} \begin{bmatrix} \dot{\bm{\theta}}_t\\\dot{\bm{\theta}}_a \end{bmatrix}
    = J_1(\bm{\theta}) \dot{\bm{\theta}}_t + J_2(\bm{\theta}) \dot{\bm{\theta}}_a
    \label{eq:defDiffKin2}
\end{align}
where $\dot{\bm{\theta}}_t$ and $\dot{\bm{\theta}}_a$ are the joint velocity vectors of the table and the manipulator, respectively. $J_1(\bm{\theta}) \!\in\! \R^{6\times m-n}$ and $J_2(\bm{\theta})\!\in\! \R^{6\times n}$ are the Jacobian columns that map the table and the manipulator joint velocities to task space velocities.

Furthermore, the relationship between the quaternion derivative $\dot{q}$ and angular velocity $\bm{\omega}$ is given by the Representation Jacobian $B(q)$:
\begin{equation}
\label{eq:jacobianrep}
    \dot{q} = B(q) \bm{\omega} = \frac{1}{2}
    \begin{bmatrix}
        -\bm{\epsilon}\trans \\ \eta I - \hat{\bm{\epsilon}}
    \end{bmatrix}\ \bm{\omega}
\end{equation}

Note that the TCP velocities can be represented with respect to the TCP frame $\F_t$, denominated body coordinates \cite{murray1994}, this way, the mapping between the joint velocities and the TCP velocities is given by:
\begin{align}
    \bm{v}^b &= J^b \bm{\dot{\theta}} =
    \begin{bmatrix}
    J^b_1(\bm{\theta}) && J^b_2(\bm{\theta}_a)
    \end{bmatrix} \begin{bmatrix} \dot{\bm{\theta}}_t\\\dot{\bm{\theta}}_a \end{bmatrix}\label{eq:defDiffKin_body}\\
    \bm{v}^b &=
    \Omega\trans \begin{bmatrix}J_1(\bm{\theta}) && J_2(\bm{\theta})
    \end{bmatrix} \begin{bmatrix} \dot{\bm{\theta}}_t\\\dot{\bm{\theta}}_a \end{bmatrix}
    \label{eq:defDiffKin_body_matrix}
\end{align}
Where $\bm{v}^b$ is the TCP velocities vector with respect to $\F_d$ and represented in $\F_t$, $J^b$ denotes the full system Jacobian in frame $\F_t$ \cite{siciliano2011robotics} and $\Omega$ is the mapping between the two representations written as
\begin{equation}
    \Omega = \begin{bmatrix}
    R && 0\\0 &&R
    \end{bmatrix}
    \label{eq:ext_rotation}
\end{equation}

Similarly, the forward and the differential kinematics, considering only the manipulator is given by:
\begin{align}
    \bm{v}^b_{a} &= \begin{bmatrix}
    \bm{\dot{p}}^b_a\\\bm{\omega}^b_a
    \end{bmatrix}=J^b_{a}(\bm{\theta}_{a})\ \dot{\bm{\theta}}_{a}
    \label{eq:defDiffKinArm}
\end{align}
where $J^b_a(\bm{\theta}_a) \in \R^{6 \times n}$ is the manipulator body Jacobian \cite{murray1994} mapping the manipulator joint velocities $\dot{\bm{\theta}}_a$ to TCP linear and angular velocities ($\dot{\bm{p}}^b_a \in \R^3$ and $\bm{\omega}^b_a \in \R^3$, respectively) with respect to $\mathcal{F}_{ab}$ and represented in the TCP frame $\mathcal{F}_t$.

The quaternion propagation may also be written in function of the body angular velocity $\bm{\omega}^b_a$ \cite{Wen1991} as the representation Jacobian in body coordinates as:
\begin{equation}
\label{eq:jacobianrep_body}
    \dot{q}_a = B_b(q_a)\ \bm{\omega}^b_a = \frac{1}{2} \begin{bmatrix}
        -\bm{\epsilon}_a\trans \\ \eta_a I + \hat{\bm{\epsilon}}_a
    \end{bmatrix} \bm{\omega}^b_a
\end{equation}
Note that the manipulator TCP velocity is expressed in body coordinates which is related with \refeq{defDiffKin_body} as:
\begin{equation}
    J^b_{a}(\bm{\theta}_{a}) = J^b_2 (\bm{\theta}_{a}) = \Omega^T\,J_{2}(\bm{\theta})
    \label{eq:jacobianrelation}
\end{equation}%
This relation is useful to reduce computational cost, as only the full system forward kinematics and Jacobian needs to be calculated. From now on, the joint position dependencies of each Jacobian shall be omitted.

\section{Task space augmentation}
\label{sec:augmented_jacobian}
The problem of accomplishing a main trajectory tracking task, where all operational space degrees of motion are needed, while maintaining the TCP aligned with a certain direction with respect to a inertial frame can be solved by the task space augmentation \cite{Egeland1987}.

Letting $\bm{x}_s=[\bm{x}_{s,1}\ \hdots \ \bm{x}_{s,p}]\trans$ be the $p$ constraint additional tasks vector to be fulfilled besides a main task defined by $\bm{x}_m$, where $\bm{x}_{m}=k_m(\bm{\theta})$ is the direct kinematics of the first task and $\bm{x}_{s,i}=k_i(\bm{\theta})$ is the direct kinematics of the $i$-th constraint-task. The differential kinematics for the main task is given by the mapping between the joint velocities and the first task velocities $\bm{v}_{m}$:
\begin{equation}
    \label{eq:main_task_mapping}
    \bm{v}_m=J_m(\bm{\theta})\dot{\bm{\theta}}
\end{equation}
where $\dot{\bm{\theta}}$ are the joint velocities and $J(\bm{\theta})$ is the first task Jacobian mapping. Letting the differential kinematics for each additional task be the mapping between the joint velocities and the task velocities $\bm{v}_{s,i}$ by:
\begin{equation}
    \label{eq:tasks_mapping}
    \bm{v}_{s,i}=J_{s,i}(\bm{\theta})\dot{\bm{\theta}}
\end{equation}
where $J_{s,i}(\bm{\theta})$ is the $i$th constraint-task Jacobian. The complete constraint differential kinematics can be written by stacking the tasks in a single relation:
\begin{equation}
    \label{eq:constraint_tasks}
    \bm{v}_s=J_s(\bm{\theta})\dot{\bm{\theta}}
\end{equation}
where $J_s=[J_{s,1}\trans,~J_{s,2}\trans,~\hdots,~J_{s,p}\trans]\trans$ and $\bm{v}_s=[\bm{v}_{s,1}\trans,~\bm{v}_{s,2}\trans,~\hdots,~\bm{v}_{s,p}\trans]\trans$. 
Stacking \refeq{main_task_mapping} and \refeq{constraint_tasks} yields a single diferential relation:
\begin{equation}
    \begin{bmatrix}
       \bm{v}_m \\ \bm{v}_s
    \end{bmatrix}=
    \begin{bmatrix}
        J_m(\bm{\theta})\\ J_s(\bm{\theta})
    \end{bmatrix}\dot{\bm{\theta}}
    =J_A(\bm{\theta})\dot{\bm{\theta}}    
    \label{eq:augmented_jacobian_base}
\end{equation}
where $J_A(\bm{\theta})$ is called augmented Jacobian. From now on, the joint position dependencies of each Jacobian shall be omitted.

\subsection{Augmented tasks for WAAM redundant systems}
Now, considering a kinematic chain composed by two robots as described in \reffig{cell-drawing}, assuming that the operational space is described by 6-DOF of movement and establishing the main task as the end-effector motions with respect to the deposition frame $\F_d$ \refeq{defDiffKin}.

Letting $\bm{\omega}_s \in \R^r$ be $r$ degrees of the TCP motion considering only the manipulator differential kinematics \refeq{defDiffKinArm} with respect to $\F_{ab}$. This constraint-task can be written by using a selection matrix  $H \!\in\! \R^{r\times6}$ and defined the additional task as:%
\begin{align}
    \bm{\omega}_s &= H \bm{v}^b_{a}\notag\\
    &= H J^b_{a} \dot{\bm{\theta}}_a = H \Omega\trans J_2 \dot{\bm{\theta}}_a \notag\\
    &=\Lambda J_2 \dot{\bm{\theta}}_a  \in \R^r
    \label{eq:restrH_augmented}
\end{align}
where $\Lambda=H\Omega\trans \in \R^{r\times 6}$ and $H J^b_{a}\in \R_{r\times n}$\footnote{Without loss of generality, $\Lambda$ could be replaced by $H$ if the second task is represented in the inertial frame}. The task augmentation problem can be described by stacking the main task \refeq{defDiffKin2} and the additional constraint-task\refeq{restrH_augmented} as:
\begin{equation}
    \begin{bmatrix}
        \bm{v}\\\bm{\omega}_s
    \end{bmatrix}=
    \begin{bmatrix}
        J_1 && J_2\\
            0_{r\times m-n} && \Lambda J_2
    \end{bmatrix} \dot{\bm{\theta}}=
    J_A \dot{\bm{\theta}}
    \label{eq:augmented_relation}
\end{equation}
Where $J_A \in \R^{6+r\times m}$ is the augmented Jacobian for this problem. Note that the secondary task does not consider the table joint velocities, since they do not affect the end-effector motion with respect to the inertial frame $\F_{ab}$.

The kinematic model \refeq{augmented_relation} has the following property:

\begin{property}
\label{prop:bounded_augmented_jacobian}
The Jacobian $J_A(\bm{\theta})$ is bounded for all $\bm{\theta}$, that is, $||J_A(\bm{\theta})||_\infty \leq c_0$, $\forall {\theta}_i \in [0, 2\pi)$, where $c_0>0$ and $i = 1,\dots,m$.
\end{property}

For the WAAM system pictured in \reffig{cell-drawing}, composed by a $6$-DOF manipulator and a $2$-DOF, which results in a $8$-DOF redundant kinematic chain, the selection matrix \refeq{restrH_augmented} is defined in order to align the TCP with a direction described in an inertial frame, in this case $\F_{ab}$. 
Naturally, the task of aligning the TCP with a desired direction $\bm{z}^b_{d}$ with respect to the inertial frame $\F_{ab}$ can be accomplished by controlling rotations about $x$ and $y$ axes. Therefore, considering $r=2$ and $\bm{\omega}_s = [\omega_x \quad \omega_y]\trans \!\in\! \R^2$, $H$ is given by:
\begin{equation}
    H=\begin{bmatrix}
    0 && 0 && 0 && 1 && 0 && 0\\
    0 && 0 && 0 && 0 && 1 && 0
    \end{bmatrix}
    \label{eq:defH}
\end{equation}    

\subsection{Control law}
For a kinematic control law, it is assumed that both the robot manipulator and the positioning table allows joint velocity command $\bm{u}$ and have internal controllers that guarantee that $\dot{\bm{\theta}} - \bm{u} \in \mathcal{L}_2 \cap \mathcal{L}_\infty$, meaning they are sufficiently fast. This assumption is taken because most robots used in industrial applications have a closed control architecture which does not allow the user to modify the parameters of the joint control algorithm. 
Therefore, given the kinematic control assumption:
\begin{equation}
    \dot{\bm{\theta}}=\bm{u}+\bm{\eta}
    \label{eq:unmodeled_dynamics}
\end{equation} 
Where $\bm{\eta}=\{\eta_1,\eta_2,\hdots,\eta_m\}$ and $\eta_i \in \mathcal{L}_2 \cap \mathcal{L}_\infty$  is a signal representing the $i$-th joint unmodeled dynamic inherent to the internal dynamic control loop.

Substituting \refeq{unmodeled_dynamics} in \refeq{augmented_relation} yields the following control system:
\begin{equation}
    \begin{bmatrix}\bm{v}\\\bm{\omega}_s\end{bmatrix} = J_A(\bm{\theta})\ (\bm{u} +\bm{\eta})
    \label{eq:diffkinsys}
\end{equation}
   
Thus, the following control law is used:
\begin{align}
    \bm{u} = 
     J_A(\bm{\theta})^\dagger
    \begin{bmatrix}
    \bar{\bm{u}}_1\\ \bar{\bm{u}}_2 
    \end{bmatrix}
    \label{eq:control-law}
\end{align}
where $\bar{\bm{u}}_1$ is designed so that the TCP pose follows a desired trajectory with respect to $\F_d$ while $\bar{\bm{u}}_2$ is designed so the TCP $r$ degrees of freedom can follow a desired trajectory with respect to $\F_{ab}$ as long as $\bar{\bm{u}}_2$ dimension $r\leq m-n$.

In order to align the manipulator TCP $z$-axis with a desired
direction $\bm{z}^b_{d}$ (e.g. direction of gravity or any other constant desired direction described in the inertial frame) without changing the TCP pose $\bm{x}$ \refeq{fk} with respect to $\F_d$, two redundant degrees of mobility are required, as $r=2$.

The system \refeq{control-law} is valid if and only if, the following assumptions hold: 
(\textbf{A1}) the robot kinematics is perfectly known;
(\textbf{A2}) the control law $[\bar{\bm{u}}_1\trans\quad \bar{\bm{u}}_2\trans]\trans$ does not drive the robotic system to singular configurations.

For the trajectory following task, the position error $\bm{e}_p$ is defined by
$\bm{e}_p = \bm{p}_{d} - \bm{p}$
where $\bm{p}_{d}$ is the desired position of $\mathcal{F}_t$ with respect to $\mathcal{F}_d$.
The orientation error $e_q$ is defined by
$e_q = [e_{\eta} \; \; \bm{e}_{\bm{\epsilon}}\trans]\trans = q_{d} \otimes q^{-1}$
where $q_{d}$ is the desired orientation of $\mathcal{F}_t$ with respect to $\mathcal{F}_d$,
and $\otimes$ indicates the quaternion product.

The control signal $\bar{\bm{u}}_1$ is defined:
\begin{equation}
    \bar{\bm{u}}_1 = \begin{bmatrix} 
    \bm{\dot{\bm{p}}}_{d} \\ 
    \bm{\omega}_{d}
    \end{bmatrix}+\begin{bmatrix} 
    K_p\ \bm{e}_p \\ 
    K_o\ \bm{e}_{\bm{\epsilon}}
    \end{bmatrix}
    \label{eq:firstcontrolsignal}
\end{equation}
where $K_p = k_p I\!\in\!\R^{3\times3}$ and $K_o = k_o I\!\in\!\R^{3\times3}$ are the controller gains, 
and $\bm{\dot{p}}_{d}, \bm{\omega}_{d} \in \R^3$ are the desired linear and angular velocity, respectively.

Now considering $\bm{z}^b$, $z$-axis of $\mathcal{F}_t$, and the desired $\bm{z}^b_{d}$ direction both represented in $\mathcal{F}_{t}$, the orientation error is given by:
\begin{equation}
\label{eq:es_extended}
    e_{qs} = 
    \begin{bmatrix}
    e_{\eta s}\\\bm{e}_{\bm{\epsilon} s}
    \end{bmatrix} 
    = 
    \begin{bmatrix}
    e_{\eta s}\\
    \begin{bmatrix}e_{\epsilon_1s}\quad e_{\epsilon_2s}\quad e_{\epsilon_3s}\end{bmatrix}\trans
    \end{bmatrix} =
    \begin{bmatrix}
    \cos{(\alpha/2)}\\
    \bm{r} \sin{(\alpha/2)}
    \end{bmatrix}
\end{equation}
where $\alpha = \cos^{-1}(\bm{z}^b_{d}\cdot\bm{z}^b)$ and $\bm{r}=(\bm{z}^b \times \bm{z}^b_{d})/\lVert \bm{z}^b \times \bm{z}^b_{d}\rVert$ are the angle and axis of the equivalent rotation needed to align $\bm{z}^b$ and $\bm{z}^b_{d}$. For the special case of aligning the TCP $z$-axis with a given reference with respect to $\F_{ab}$, $\bm{r}=[r_1\ r_2\ 0]\trans$ and $\bm{e}_{\bm{\epsilon} s}=[e_{\epsilon_1s}\ e_{\epsilon_2s}\ 0]\trans=[\bm{e}_{s}\ 0]\trans$. 

The control signal $\bar{\bm{u}}_2$ to align the TCP is defined by:
\begin{equation}
    \bar{\bm{u}}_2 = \bm{\omega}_{sd} + K_s\ 
    \bm{e}_s
    \label{eq:secondcontrolsignal}
\end{equation}
where $K_s = k_sI\!\in\!\R^{2\times2}$ is a control gain, $\bm{\omega}_{sd}$ is the TCP desired velocity with respect to its own frame $\F_{ab}$ and $\bm{e}_s\!\in\!\R^2$ is the orientation error considering only the rotations about $x$ and $y$ axis. 

Regarding the second task, substituting \refeq{es_extended} in \refeq{jacobianrep_body} yields: 
\begin{align}
    \dot{\bm{e}}_{qs} &= B_B(e_{qs}) \tilde{\bm{\omega}}^b_a = \frac{1}{2}
    \begin{bmatrix}
        -\bm{e}_{\bm{\epsilon} s}\trans \\ e_{\eta s} I + \hat{\bm{e}}_{\bm{\epsilon} s}
    \end{bmatrix}\ \begin{bmatrix}\tilde{\bm{\omega}}_s \\ \tilde{\omega}_z^b\end{bmatrix}
    \label{eq:propagation_constrained}
\end{align}
where $\tilde{\bm{\omega}}_s=[\tilde{\omega}^b_x \quad \tilde{\omega}^b_y]$ and, as $e_{\epsilon_3s}=0$, the quaternion error propagation is given by:
\begin{align}
    \dot{\bm{e}}_{qs} &=\begin{bmatrix}\dot{e}_{\eta s}\\\dot{\bm{e}}_{s} \\ \dot{e}_{\epsilon_3s}\end{bmatrix} = 
    \frac{1}{2}\begin{bmatrix}
    -\bm{e}_{s}\trans \bm{\tilde{\omega}}_s \\ e_{\eta s}\tilde{\bm{\omega}}_s +\begin{bmatrix}
        e_{\epsilon_2 s} \\-e_{\epsilon_1 s}
    \end{bmatrix}\tilde{\omega}^b_z \\[0.5cm]
    \begin{bmatrix}
    -e_{\epsilon_2s} & e_{\epsilon_1s}
    \end{bmatrix}\tilde{\bm{\omega}}_s + e_{\eta s}\tilde{\omega}^b_z
    \end{bmatrix}  
    \label{eq:qserrorpropagation}
\end{align}

Thus, considering \refeq{control-law}-\refeq{secondcontrolsignal}, the closed-loop error dynamic is given by:

\begin{equation}
    \begin{bmatrix} 
    \dot{\bm{e}}_p \\ 
    \tilde{\bm{\omega}} \\\tilde{\bm{\omega}_s}
    \end{bmatrix} = \begin{bmatrix} 
    \dot{\bm{p}}_{d} - \dot{\bm{p}} \\ 
    \bm{\omega}_{d} - \bm{\omega} \\
    \bm{\omega}_{sd} - \bm{\omega}_s
    \end{bmatrix} =\begin{bmatrix} 
    - K_p\ \bm{e}_p \\ 
    - K_o\ \bm{e}_{\bm{\epsilon}}\\
    - K_s\ \bm{e}_{s}
    \end{bmatrix} - J_A(\bm{\theta}) \bm{\eta}
    \label{eq:errordynamics}
\end{equation}
And $\bm{e}_p \to 0$, $\bm{e}_{\bm{\epsilon}} \to 0$ and $\bm{e}_{s} \to 0$ if $K_p, K_o, K_s > 0$.

Considering the property \refprop{bounded_augmented_jacobian}, $J_A(\bm{\theta}) \!\in\! \cal{L}_\infty$, and given that
$\bm{\eta} \!\in\! \mathcal{L}_{2} \cap \cal{L}_\infty$, then  $J_A(\bm{\theta})\bm{\eta} \!\in\! \mathcal{L}_{2} \cap \cal{L}_\infty$ \cite{khalil02}.
Thus, it can be considered that there exists a positive constant $L_m$ such that
$\int_0^t \bm{\eta}\trans J_A\trans J_A \bm{\eta} d\tau \leq L_m ~ \forall ~ t$. 
Therefore, the following theorem with properties of the closed loop system stability is presented:

\begin{theorem}
\label{theo:extended_jacobian}
    Consider the closed-loop system described by \refeq{errordynamics}
    Assume that the reference signal 
$p_{d}$ is piecewise continuous and uniformly bounded in norm, and $q_{d}$ the is the unit quaternion representation of $R_{d} \in SO(3)$.
Then, under assumptions (\textbf{A1}) and (\textbf{A2}), and considering 
$\bm{\eta} \in \mathcal{L}_{2} \cap \cal{L}_\infty$, 
the following properties hold:
(i) all signals of the closed-loop system are uniformly bounded;
(ii) $\lim_{t \rightarrow \infty} \bm{e}_{p}=0$, $\lim_{t \rightarrow \infty} \bm{e}_{\bm{\epsilon}} =0$,
$\lim_{t \rightarrow \infty} e_{\eta}(t) = \pm1$, $\lim_{t \rightarrow \infty} \bm{e}_{s} =0$ and
$\lim_{t \rightarrow \infty} e_{\eta s}(t) = \pm1$. 
\end{theorem}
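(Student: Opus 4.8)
The plan is to build a single Lyapunov function aggregating the three error blocks of \refeq{errordynamics} and to treat $J_A\bm{\eta}$ as an $\mathcal{L}_2\cap\mathcal{L}_\infty$ perturbation rather than assuming it decays pointwise. Exploiting the unit-norm identities $e_\eta^2+\lVert\bm{e}_{\bm{\epsilon}}\rVert^2=1$ and $e_{\eta s}^2+\lVert\bm{e}_s\rVert^2=1$ (the latter valid while $e_{\epsilon_3 s}=0$), I would take the candidate
\begin{equation}
  V=\tfrac12\,\bm{e}_p\trans\bm{e}_p+2(1-e_\eta)+2(1-e_{\eta s}),
\end{equation}
which is nonnegative and vanishes exactly at $\bm{e}_p=0$, $e_\eta=1$, $e_{\eta s}=1$.

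First I would differentiate $V$ along \refeq{errordynamics}. The position term contributes $\bm{e}_p\trans\dot{\bm{e}}_p$, while for the scalar quaternion parts I would use the propagation laws: \refeq{qserrorpropagation} supplies $\dot e_{\eta s}=-\tfrac12\bm{e}_s\trans\tilde{\bm{\omega}}_s$, and the same reduction applied to \refeq{jacobianrep} gives $\dot e_\eta=-\tfrac12\bm{e}_{\bm{\epsilon}}\trans\tilde{\bm{\omega}}$. Writing $[\bm{\eta}_p\trans\ \bm{\eta}_o\trans\ \bm{\eta}_s\trans]\trans=J_A\bm{\eta}$ and inserting the closed-loop relations $\dot{\bm{e}}_p=-K_p\bm{e}_p-\bm{\eta}_p$, $\tilde{\bm{\omega}}=-K_o\bm{e}_{\bm{\epsilon}}-\bm{\eta}_o$, $\tilde{\bm{\omega}}_s=-K_s\bm{e}_s-\bm{\eta}_s$ collapses the derivative to $\dot V=-\bm{e}\trans K\bm{e}-\bm{e}\trans(J_A\bm{\eta})$, with $\bm{e}=[\bm{e}_p\trans\ \bm{e}_{\bm{\epsilon}}\trans\ \bm{e}_s\trans]\trans$ and $K=\mathrm{diag}(K_p,K_o,K_s)>0$. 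Applying Young's inequality to the cross term yields $\dot V\le-\tfrac12\lambda_{\min}(K)\lVert\bm{e}\rVert^2+\tfrac{1}{2\lambda_{\min}(K)}\lVert J_A\bm{\eta}\rVert^2$.

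To establish both claims I would integrate this inequality. Using the stated bound $\int_0^t\bm{\eta}\trans J_A\trans J_A\bm{\eta}\,d\tau\le L_m$ gives $V(t)\le V(0)+L_m/(2\lambda_{\min}(K))$ for all $t$, so $V\in\mathcal{L}_\infty$ and $\bm{e}_p,\bm{e}_{\bm{\epsilon}},e_\eta,\bm{e}_s,e_{\eta s}$ are uniformly bounded; with $p_{d}$ bounded and $q_{d}$ a unit quaternion this bounds the actual pose, and Property~\refprop{bounded_augmented_jacobian} with $\bm{\eta}\in\mathcal{L}_\infty$ bounds $\dot{\bm{\theta}}$ via \refeq{unmodeled_dynamics}, proving~(i). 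The same integrated inequality shows $\bm{e}\in\mathcal{L}_2$, since the perturbation integral is absorbed into $L_m$; and the closed-loop equations, with $\bm{e}$ bounded, Property~\refprop{bounded_augmented_jacobian}, and $\bm{\eta}\in\mathcal{L}_\infty$, give $\dot{\bm{e}}\in\mathcal{L}_\infty$. Barbalat's lemma then forces $\bm{e}\to0$, that is $\bm{e}_p\to0$, $\bm{e}_{\bm{\epsilon}}\to0$, $\bm{e}_s\to0$; and the unit-norm identities give $e_\eta^2=1-\lVert\bm{e}_{\bm{\epsilon}}\rVert^2\to1$ and $e_{\eta s}^2=1-\lVert\bm{e}_s\rVert^2\to1$, hence $e_\eta\to\pm1$ and $e_{\eta s}\to\pm1$, proving~(ii).

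I expect the main obstacle to be the perturbation analysis rather than the nominal feedback design: because $\bm{\eta}$ does not vanish pointwise, I cannot invoke LaSalle-type asymptotic stability of an autonomous system and must instead route convergence through the $\mathcal{L}_2$ bound on $J_A\bm{\eta}$ and Barbalat's lemma, which requires certifying uniform continuity of $\bm{e}$ (equivalently $\dot{\bm{e}}\in\mathcal{L}_\infty$). A secondary care point is the $2$-DOF secondary task: reducing its quaternion storage to $2(1-e_{\eta s})$ is legitimate only while $e_{\epsilon_3 s}=0$, so I would confirm that the augmented control of \refeq{control-law} under (\textbf{A2}) keeps $e_{\epsilon_3 s}$ at zero, or otherwise retain the full three-component error and check that the $\tilde{\omega}^b_z$-driven term in \refeq{qserrorpropagation} does not spoil the sign of $\dot V$.
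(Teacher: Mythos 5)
Your proof is correct and is essentially the paper's own argument: your storage function coincides with the paper's Lyapunov candidate, since $(e_{\eta}-1)^2+\bm{e}_{\bm{\epsilon}}\trans\bm{e}_{\bm{\epsilon}}=2(1-e_{\eta})$ and $(e_{\eta s}-1)^2+\bm{e}_{s}\trans\bm{e}_{s}=2(1-e_{\eta s})$ by the unit-quaternion identities, and both proofs run through the same Young's-inequality absorption of the cross term $-\bm{e}\trans J_A\bm{\eta}$ and conclude with Barbalat's lemma; the only (equivalent) bookkeeping difference is that the paper adds the deficit term $\alpha\,(L_m-\int_0^t\bm{\eta}\trans J_A\trans J_A\bm{\eta}\,d\tau)$ to $V$ so that $\dot V\le 0$ pointwise and applies Barbalat to $\dot V$ (requiring $\ddot V$ bounded), whereas you integrate the dissipation inequality to obtain $\bm{e}\in\mathcal{L}_2$ and apply Barbalat to $\bm{e}$ (requiring $\dot{\bm{e}}$ bounded). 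Your flagged concern about $e_{\epsilon_3 s}$ is not a gap: since $\bm{z}^b$ is the $z$-axis of $\mathcal{F}_t$ expressed in $\mathcal{F}_t$ itself, the rotation axis $\bm{r}=(\bm{z}^b\times\bm{z}^b_{d})/\lVert\bm{z}^b\times\bm{z}^b_{d}\rVert$ is orthogonal to $\bm{z}^b$ by construction, so $e_{\epsilon_3 s}=0$ holds identically by definition of the error in \refeq{es_extended}, and even if one keeps the full three-component error the $\tilde{\omega}^b_z$-driven term drops out of $\dot V$ because $e_{\epsilon_1 s}e_{\epsilon_2 s}-e_{\epsilon_2 s}e_{\epsilon_1 s}=0$, which is exactly the cancellation the paper invokes.
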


\begin{proof}
    See the appendix \ref{sec:appendix}.
\end{proof}
\section{Constrained kinematics control}
\label{sec:contrained-kinematics}

Another option for the coordinated motion is based in the constrained Jacobian formulation \cite{phamIfac2014,Pham2015}. The kinematic chain is again composed by two robots as described in \reffig{cell-drawing} and it is assumed that the operational space is described by 6-DOF of movement. The objective is to describe the redundant system differential kinematics in order to relate extra degrees of motion, called constrained variables, with respect to the $\F_{ab}$ without affecting the TCP trajectory tracking with respect to the deposition frame $\F_d$.

\subsection{Constrained differential model}
Now, considering that $r$ degrees of freedom of the manipulator differential kinematics \refeq{defDiffKinArm} as another task. One can write a reduced differential kinematics relation as in \refeq{restrH_augmented} and assuming that $\Lambda$ is composed by orthonormal rows, then it has a right pseudo-inverse $\Lambda^{\dagger}=\Lambda\trans~\in~\R^{6\times r}$ ($\Lambda \Lambda^{\dagger} = I_r \in \R^{r\times r}$), \refeq{restrH_augmented} can be written as:
\begin{align}
    &\Lambda J_2 \dot{\bm{\theta}}_a = \bm{\omega}_s \nonumber \\
    &\Lambda(J_2 \dot{\bm{\theta}}_a- \Lambda\trans \bm{\omega}_s) = \bm{0} \in \R^r
\label{eq:restrH_complete}
\end{align}

Clearly $J_2 \dot{\bm{\theta}}_a- \Lambda\trans \bm{\omega}_s$ lies in the null space of $\Lambda$ such that $dim(N(\Lambda))={6-r}$, thus, there exists an auxiliary control vector $\bar{\bm{v}}\!\in\! \R^{6-r}$ that satisfies:
\begin{align}
    J_2 \dot{\bm{\theta}}_a- \Lambda\trans \bm{\omega}_s &=\Lambda^\#\bar{\bm{v}}\\
    J_2 \dot{\bm{\theta}}_a &= \Lambda^\#\bar{\bm{v}} + \Lambda\trans \bm{\omega}_s
    \label{eq:H_null}
\end{align}
where $\Lambda^\# \!\in\! \R^{6\times 6-r}$ is the null space projector of $\Lambda$ ($\Lambda \Lambda^{\#} = \bm{0}$).

Substituting \refeq{H_null} in \refeq{defDiffKin2}, one has:

\begin{align}
    \bm{v} &=  J_1\dot{\bm{\theta}}_t + \Lambda^\#\bar{\bm{v}} + \Lambda\trans \bm{\omega}_s\label{eq:vrelation_extend}\\
     &=  J_c
    \begin{bmatrix} \dot{\bm{\theta}}_t\\ \bar{\bm{v}}\end{bmatrix} +\Lambda\trans \bm{\omega}_s
    \label{eq:vrelation_short}
\end{align}
where $J_c = \begin{bmatrix}J_1 && \Lambda^\#\end{bmatrix}\!\in\! \R^{6\times m-n+6-r}$ is the constrained Jacobian \cite{phamIfac2014}. The control problem can be decoupled between the table and manipulator joint velocities signals by assuming that $J_c$ has a right pseudo inverse, which yields:
\begin{equation}
    \label{eq:first_control_problem}
    \begin{bmatrix} \dot{\bm{\theta}}_t\\ \bar{\bm{v}}\end{bmatrix}=J_c^\dagger (\bm{v} -\Lambda\trans \bm{\omega}_s)
\end{equation}

Now, assuming that $J_2$ is a full rank matrix and has a right pseudo-inverse, \refeq{H_null} can be written as:
\begin{align}
    \dot{\bm{\theta}_{a}}&={J_2}^\dagger(\Lambda^\#\bar{\bm{v}} + \Lambda\trans \bm{\omega}_s)
    \label{eq:constrainedThetaA}
\end{align}

\subsection{Equivalence with the Augmented Jacobian}

In order to extend the velocity reference vector to comprise both the TCP velocities ($\bm{v}$) with respect to $\F_d$ and the constrained velocities vector ($\bm{\omega}_s$) with respect to the inertial frame $\F_{ab}$, the relation \refeq{vrelation_extend}, can be written as:
\begin{equation}
    \begin{bmatrix}
        \bm{v} \\ \bm{\omega}_s
    \end{bmatrix} =
    \begin{bmatrix}
        J_1 && \Lambda^\# && \Lambda\trans\\ 0 && 0 && I
    \end{bmatrix} 
    \begin{bmatrix}
        \dot{\bm{\theta}}_t \\ \bar{\bm{v}}\\ \bm{\omega}_s
    \end{bmatrix}
    \label{eq:vrelation_extended}
\end{equation}
Given that $\begin{bmatrix}
    \Lambda^\#&& \Lambda\trans
\end{bmatrix} \in \R^{6\times 6}$ consists of a full-rank orthonormal matrix, \refeq{H_null} yields:

\begin{align}
    \begin{bmatrix}\bar{\bm{v}}\\\bm{\omega}_s\end{bmatrix}=
    \begin{bmatrix}
    \Lambda^\#&& \Lambda\trans
\end{bmatrix}\trans J_2\dot{\bm{\theta}}_a=
    \begin{bmatrix}
    {\Lambda^\#}\trans \\ \Lambda
\end{bmatrix} J_2\dot{\bm{\theta}}_a
\label{eq:constr_vector_relation}
\end{align}

Substituting \refeq{constr_vector_relation} in \refeq{vrelation_extended} and, assuming that $\Lambda^\#$ is composed by orthonormal rows which implies that $\Lambda^\# {\Lambda^\#}\trans=I\in \R^{6-r}$ yields:
\begin{align}
    \begin{bmatrix}
        \bm{v} \\ \bm{\omega}_s
    \end{bmatrix} &=
    \begin{bmatrix}
        J_1 && \Lambda^\# && \Lambda\trans\\ 0 && 0 && I
    \end{bmatrix}  
    \begin{bmatrix}
        \dot{\bm{\theta}}_t \\ \begin{bmatrix}
    {\Lambda^\#}\trans\\ \Lambda
\end{bmatrix} J_2\dot{\bm{\theta}}_a
    \end{bmatrix}\label{eq:constraineddiffkinematics1}\\
    \begin{bmatrix}
        \bm{v} \\ \bm{\omega}_s
    \end{bmatrix} &=
   \begin{bmatrix}
        J_1 && J_2\\ 0 && \Lambda J_2
    \end{bmatrix} 
    \begin{bmatrix}
        \dot{\bm{\theta}}_t \\ \dot{\bm{\theta}}_a
    \end{bmatrix}
    = J_A
    \begin{bmatrix}
        \dot{\bm{\theta}}_t \\ \dot{\bm{\theta}}_a
    \end{bmatrix}
    \label{eq:constraineddiffkinematics}
\end{align}
where $J_A \!\in\! \R^{6+r\times m}$ is the same augmented Jacobian \refeq{augmented_relation} that maps the joint velocities signal to the augmented velocity vector $[\bm{v}\trans \bm{\omega}_s\trans]\trans$. 

Note that \refeq{constraineddiffkinematics1}-\refeq{constraineddiffkinematics} in addition to show the equivalence between the constrained kinematics algorithm, also proves that the additional task of the TCP alignment with a desired direction lies in the null space of the first task.
\section{Singularity Analysis}
\label{sec:singularity}

In addition to the common singular configurations inherent to the kinematics of a manipulator, an algorithmic singularity arises for the augmented Jacobian control algorithm \refeq{augmented_jacobian_base} when $\mathcal{R}\{J_m\trans\}\cap \mathcal{R}\{J_s\trans\}\neq \{\bm{0}\}$ \cite{Chiacchio1991,handbook}, meaning that the augmented Jacobian has conflicting tasks.

For the WAAM system of \reffig{cell-drawing} and the augmented task problem defined in \refeq{augmented_relation}, this algorithmic singularity arises when $\F_d$ has its $z$-axis aligned with the inertial frame used to design the secondary task. Algebraically that means that $J$ and $\Lambda J_2$ have linearly dependent rows and the augmented Jacobian $J_A$ is not invertible. This becomes obvious in the constrained Jacobian algorithm, since this configuration leads the first column of $J_1$ to become linearly dependent of the columns of $\Lambda^\#$. 

Note that this issue also arises for the task-priority algorithm proposed in \cite{Lizarralde2022}, but in this case the projection matrix $J_sJ^\#$ can not be inverted.

Consider the augmented Jacobian $J_A\in \R^{n+r\times m}$ and it singular value decomposition:

\begin{equation}
    J_A=U\Sigma V\trans=\sum_{i=1}^k \sigma_i \bm{u}_i\bm{v}_i\trans
    \label{ew:svd}
\end{equation}
where $U$ is the orthonormal matrix of left-singular vectors $\bm{u}_i$, $V$ is the orthonormal matrix of the right-singular vectors $\bm{v}_i$ and $\Sigma$ is the diagonal matrix composed by the singular values ${\sigma}_i$ of the matrix $J_A$.

\begin{equation}
    J_A^\dagger=V\Sigma^\dagger U\trans=\sum_{i=1}^k \frac{1}{\sigma_i} \bm{v}_i\bm{u}_i\trans
    \label{ew:pinv}
\end{equation}

The singularity occurs when $k<n+r$ and the Jacobian has null singular values. A common algorithm to reduce the effects of the inversion of the ill-conditioned Jacobian is the Damped Least Square (DLS) method \cite{Nakamura1986} where the control law \refeq{control-law} is rewritten as:
\begin{align}
	\bm{u} &= J_{A}\trans\left(J_{A}\left(J_{A}\right)\trans+\kappa^2I_n\right)^{-1} \begin{bmatrix}
	    \bar{\bm{u}}_1\\\bar{\bm{u}}_2
	\end{bmatrix}
	\label{eq:dlsmanip}\\
	\bm{u} &= \left(\sum_{i=1}^k \frac{\sigma_i}{\sigma_i^2 +\kappa^2} \bm{v}_i\bm{u}_i\trans\right) \begin{bmatrix}
	    \bar{\bm{u}}_1\\\bar{\bm{u}}_2
	\end{bmatrix}
	\label{eq:dlssvd}
\end{align}%
where $\kappa$ is a damping factor that makes the pseudo-inverse better conditioned from a numerical point of view \cite{mayorga}. 

The damping factor can be constant, resulting in a numerically stable algorithm in the entire operating space \cite{Wampler}. On the other hand, a constant damping factor causes inaccurate movement. Alternatively, since the augmented Jacobian is ill-conditioned due the addition of the secondary task \refeq{restrH_augmented} and given that WAAM has a small acceptable tolerance in the torch tip alignment with orientation parameters, one can solve the singularity problem with the damped least-squares inverse with numerical
filtering \cite{chiaverini}, thus the control law \refeq{dlsmanip} can be written as:
\begin{equation}
 \bm{u} = \left[\left(\sum_{i=1}^{k-1} \frac{1}{\sigma_i}\bm{v}_i\bm{u}_i\trans \right) + \frac{\sigma_{k}}{\sigma_{k}^2 +\kappa^2}\bm{v}_{k}\bm{u}_{k}\trans \right]\begin{bmatrix}
	    \bar{\bm{u}}_1\\\bar{\bm{u}}_2
	\end{bmatrix}
	\label{eq:selectivedls}
\end{equation}

This strategy progressively reduces the velocity associated to the task responsible for the algorithm singularity, being in this case the task orthogonal to $\bm{u}_k$, while not affecting the deposition path tracking.
\section{Experimental results and analysis}
\label{sec:results}
In order to validate the proposed control scheme, the experiments of this section were performed to produce thin walls parts with WAAM. This section presents the experimental setup and the manufacturing of parts which requires the coordinated movements between a manipulator and a positioning table.

\subsection{Experimental setup}
\label{subsec:experimental-setup}
For the study case experiments, a WAAM robotic cell consisting of a 6DOF KUKA KR90 robotic arm and a 2DOF KUKA KP2 positioning table, both controlled by a KUKA KRC4 Controller with KUKA.RSI (Robot Sensor Interface) software add-on installed \cite{Arbo2020} is used. The welding torch, connected to a Fronius CMT Power Source, is attached to the robot end effector (\reffig{kr-90-cell}).
\begin{figure}[ht!]
    \centering
    \includegraphics[width=0.7\columnwidth,trim={0 0 0 0},clip]{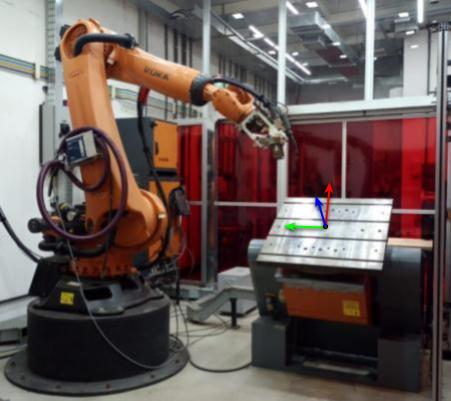}
    \caption{Robotic system setup, consisting of a KUKA KR90 robotic arm with a welding torch and a KUKA KP2 positioning table}
    \label{fig:kr-90-cell}
\end{figure}

A PC running Linux OS Ubuntu 20.04 LTS is used to run Robot Operating System (ROS) Noetic\footnote{http://wiki.ros.org/noetic} and a modified version of the ROS package \emph{kuka\_experimental} named iWAAM is used to control the robot trough velocity commands (more details about the communication setup are described in \cite{Coutinho2022}).  
The control algorithm runs in MATLAB R2021a and is set to acquire joint position information and send desired joint velocity commands at $60$\,Hz.

The trajectories are planned using the deposition frame $\F_d$ as reference, which, for better understanding of the trajectories equations, is shown in \reffig{kr-90-cell}.

Before the manufacturing experiments, single bead tests have been performed to designate the deposition parameters and travel speed, through that, it has been verified that, with the parameters of Table~\ref{tab:parameters-tab}, the travel speed of $V_{ts}=7.5$\,mm/s produces beads with an average $1.6$\,mm height while for $V_{ts}=5$\,mm/s the bead height is $2.0$\,mm.

\begin{table}[ht!]
    \caption{Deposition parameters for the experiments}
    \centering
    \begin{tabular}{|c|p{3.5cm}|}
    \hline
    Deposition mode         & CMT\\ \hline
    Average arc current                   & 110.4 A \\ \hline
    Average arc voltage                   & 14.3 V \\ \hline
    Average wire feed speed                   & 3.0 m/min \\ \hline
    Power                  & 30\% \\ \hline
    Arc length correction    & 25.0\% \\ \hline
    Dynamic correction      & 2.5\% \\ \hline
    Burn back               & 100.0 ms \\ \hline
    Preflow time            & 2000 ms \\ \hline
    Crater time             & 500.0 ms \\ \hline
    Metal feed wire                     & {AWS A5.28 - ER90s-b3}\quad {(Lincoln Electric - LNM 20)} \\ \hline
    Shielding Gas           & Argon 8\% CO2\\\hline
    Travel speed ($V_{ts}$)   &   7.5~mm/s (Inclined thin-wall) 5~mm/s (Curved thin-wall) 5~mm/s (Intake funnel)\\\hline
    Base metal   &  Carbon steel plate (ASTM A36 standard specification)\\
    \hline
    \end{tabular}
    \label{tab:parameters-tab}
\end{table}

Note that, for all parts reported in this section the strategy of alternate the arc ignition point and change the deposition direction between odd and even layers is used in order to limit deformations of the thin walls in the surrounding areas where the arc was struck, specifically at the end and beginning of each layer.

\subsection{Inclined thin-wall}
\label{subsec:inclined-thin-wall}
To introduce the concept of layer height correction using a redundant robotic system, the first experiment involves the manufacturing of an inclined thin wall. A nominal point-to-point motion timing law $s(t) \in [0,1]$ with a trapezoidal time scaling profile \cite{lynch2017modern} has been used to avoid jerky motion. 

For the sake of simplicity, the inclined thin wall has been deposited along the $y$-axis, as shown in the coordinate system in \reffig{kr-90-cell}. This configuration ensures that the motion profile is applied solely to movements along the $y$-axis, while the $x$ and $z$ axis references handle layer height adjustments. The position trajectory for this experiment is defined as:
\begin{equation}
\label{eq:traj_thin_wall}
   \bm{p}_d=\begin{bmatrix}
       \sin(\gamma)~n_l~l_h+o_x\\l_l s(t)+o_y \\\cos(\gamma)~n_l~l_h 
   \end{bmatrix}
\end{equation}
where $n_l$ is the number of layers, $l_h$ is the layer height (assumed to be constant), $o_x$ and $o_y$ is the trajectory offset in the $xy$ plane from $\F_d$ origin, $\gamma$ is the inclination angle of the thin wall to be deposited and $l_l$ is its length.

For this task, the torch orientation is set  to remain  pointing towards gravity while having an inclination $\gamma$ with the workpiece normal, which in this case is a rotation around the $y$-axis of $\F_d$. The desired orientation is then written by:
\begin{equation}
\label{eq:traj_ori_thin_wall}
    q_{d}=\begin{bmatrix}
    \cos(\gamma/2) & 0 & \sin(\gamma/2) & 0
    \end{bmatrix}\trans
\end{equation}

A thin wall with $\gamma=\pi/4$ ($45^\circ$) inclination, $150$\,mm length and $30$\,mm height has been manufactured and is shown in \reffig{thin-wall}. For this experiment, each layer was deposited in the opposite direction of the previous one 
to improve the part symmetry, and the final part has been manufactured with $20$ layers. The deposition was performed with a travel speed $V_{ts}=7.5$\,mm/s with the acceleration and deceleration time $t_a=0.1$\,s.

The position and orientation (quaternion vector part) errors for three layers ($5^{th}$, $10^{th}$ and $15^{th}$) are shown in \reffig{inclined-wall-all-errors}, recalling that the quaternion error ${e}_{q}= [1\quad 0\quad 0\quad 0]\trans$ means that the frames are aligned.
\begin{figure}[ht!]
    \centering
    \includegraphics[width=0.49\textwidth,trim={0 0 0 0},clip]{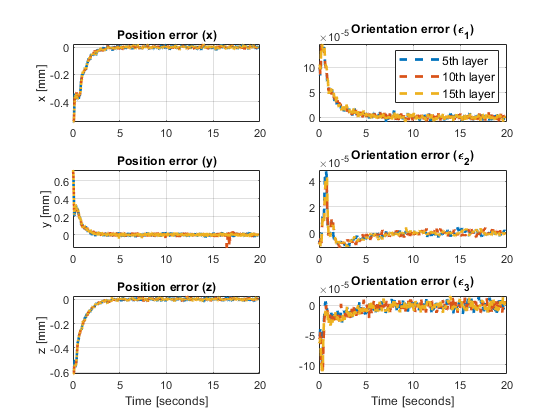}
    \caption{Position and orientation trajectory errors of the $5^{th}$, $10^{th}$ and $15^{th}$ layers of the inclined wall}
    \label{fig:inclined-wall-all-errors}
\end{figure}

For both errors in \reffig{inclined-wall-all-errors} the tracking behavior show satisfactory results, as the error converges to zero. The small tracking error at the beginning of the trajectory is expected, as the controlled system has been implemented with a slew rate acceleration saturation to avoid torque errors in the robot controller. Overall, these errors are small enough to not affect the part final geometry.

The final produced part is shown in \reffig{thin-wall}. Notice that in addition to $20$ layers of the inclined wall, a single bead has been deposited first to be used as support and prevent molten metal from dripping.

\begin{figure}[ht!]
    \centering
    \includegraphics[width=0.32\columnwidth,trim={5cm 15cm 10cm 15cm},clip]{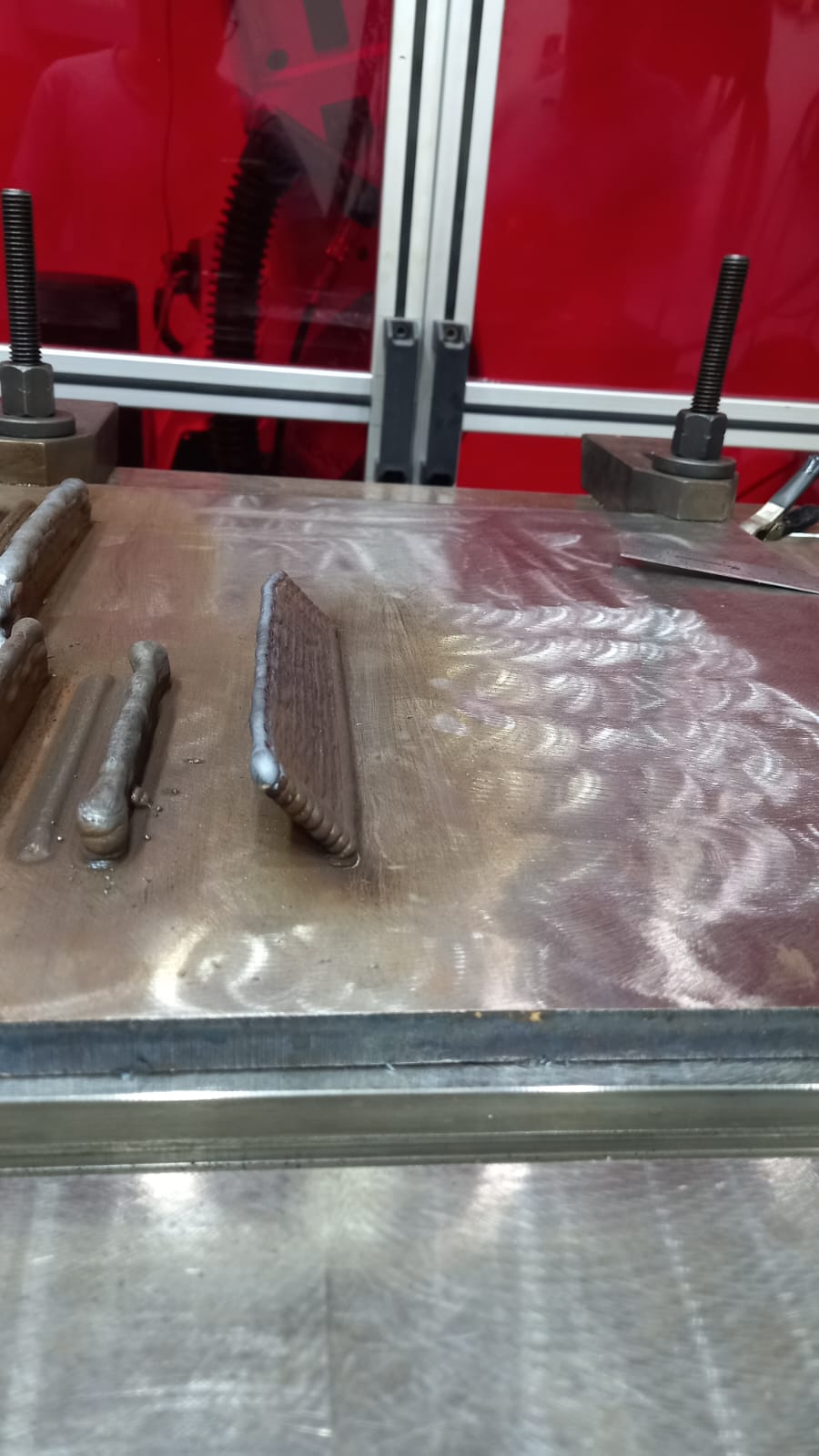}
    \caption{Final inclined wall produced part}
    \label{fig:thin-wall}
\end{figure}

\subsection{Curved thin-wall}
\label{subsec:curved-thin-wall}
A curved wall has been manufactured to attest the practicality of the proposed approach for curved geometry parts. The part has been deposited following the scheme of \reffig{curved-wall-comp}(a) divided into three sections: the first section is a thin wall with a $h_1$ height which is built as the foundation to the curved part, the second section is a $90^\circ$ curved wall, deposited by changing the torch orientation with respect to the workpiece for each layer while maintaining the torch pointing toward gravity and for the final section, another straight thin wall with a $h_2$ height is deposited on top of the curved section.
\begin{figure}[ht]
    \centering
    \subfloat[][]{\includegraphics[width=0.40\columnwidth,trim={0 0 0 0},clip]{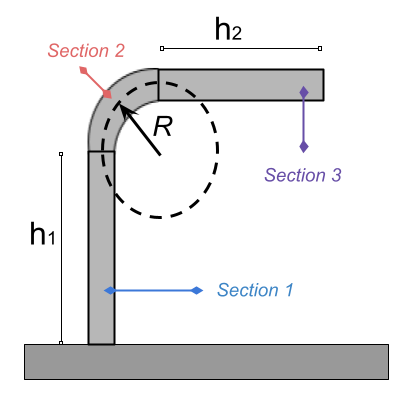}}\label{fig:curved-scheme}
    \hspace{0.2cm}
    \subfloat[][]{\includegraphics[width=0.35\columnwidth,trim={20cm 55cm 30cm 35cm},clip]{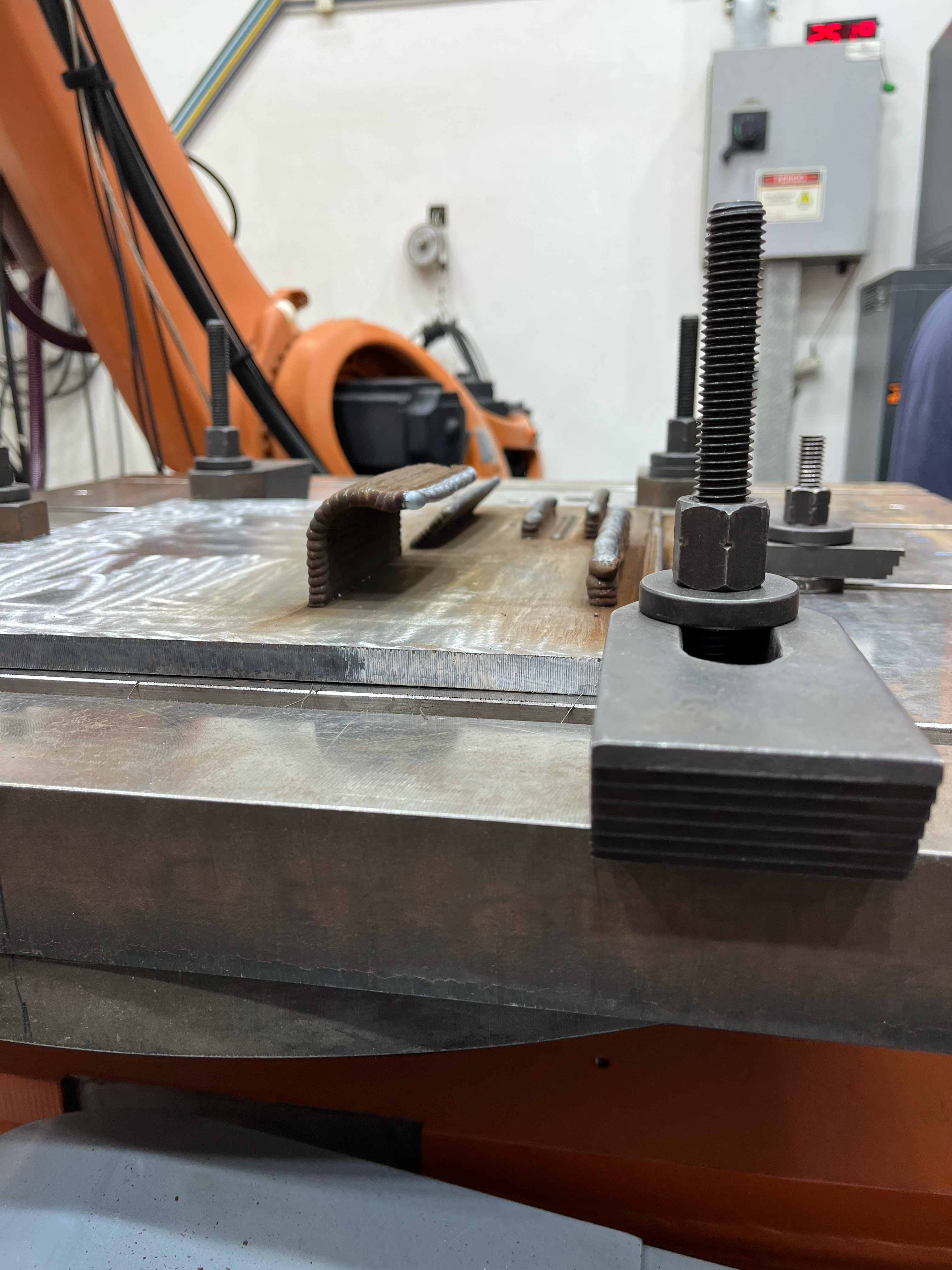}}\label{fig:curved-wall}
    \caption{(a) Curved wall manufacturing strategy scheme and (b) the produced part.}
    \label{fig:curved-wall-comp}
\end{figure}

The deposition of the first section follows the steps of the inclined thin-wall trajectory \refeq{traj_thin_wall}-\refeq{traj_ori_thin_wall}, considering $\gamma=0$, while the trajectory for the second (curved) section follows:
\begin{subequations}
\label{eq:traj_curved_wall}
\begin{align}
    \bm{p}_d&=\begin{bmatrix}
        \sin(n_l\Delta\gamma) n_l l_h+o_x \\l_l s(t)+o_y\\\cos(n_l\Delta\gamma) n_l l_h + o_z 
    \end{bmatrix}\\
    q_{d}&=\begin{bmatrix} \cos(n_l\Delta\gamma/2) & 0 & \sin(n_l\Delta\gamma/2) & 0
    \end{bmatrix}\trans
\end{align}
\end{subequations}
where $n_l$, $l_h$, $l_l$ , $o_x$ and $o_y$ follows the description of \eqref{eq:traj_thin_wall}, and $o_z$ is the offset along $\F_d$  $z$-axis (chosen to be the first section layers average height). The incremental inclination angle $\Delta\gamma$ is computed according to the curve radius $R$ and the layer height $l_h$ as:
\begin{align}
    \Delta\gamma&=\frac{\pi}{2 n_t}\label{eq:delta_alpha}\\
    n_t&=\pi R\frac{1}{2 l_h}
\end{align}
where $n_t$ is the total number of layers necessary to produce the curved section. Notice that, for the third section, $\gamma=\pi/2$ ($90^\circ$) is constant and the trajectory resembles \refeq{traj_thin_wall}, but with switched $x$ and $z$ axis trajectories.

A curved wall part manufactured with this method is show in \reffig{curved-wall-comp}(b). The final part has a total length of $l_l\approx 97$\,mm, divided into segments where $h_1=30$\,mm, $R=30$\,mm and $h_2=20$\,mm. The travel speed used for this experiment is set at $V_{ts}=5$\,mm/s with an acceleration and deceleration time of $t_a=0.1$\,s. The deposition parameters produce beads with an average height of $2.0$\,mm, resulting in a final part composed of approximately $n_t\approx 48$ layers.

The norms of the position and orientation (quaternion vector part) errors of the TCP with respect to $\F_d$ are shown in \reffig{curved-wall-mean-errors}. The chosen metric to show the errors is the root mean square (RMS) error between depositions for each time instant, as each deposition has a similar amount of data. The RMS value for each coordinate error at a given time instant is given by $e_{rms} = \sqrt{\sum_{k=1}^N e(k)^2 / N}$, where $N$ is the number of layers.  The blue area surrounding each plot curve represents the standard deviation of the trajectory RMS error at each time instant.
\begin{figure}[ht!]
    \centering
    \includegraphics[width=0.4\textwidth,trim={0 0 0 0},clip]{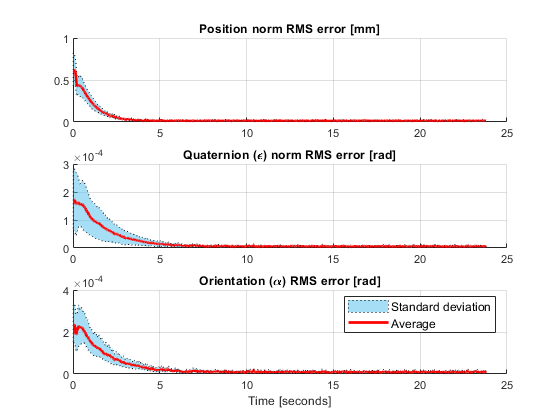}
    \caption{Position and orientation RMS error and standard deviation of the curved wall experiment.}
    \label{fig:curved-wall-mean-errors}
\end{figure}

The RMS error of the inclination angle $\alpha$ between the TCP with the desired $z$-axis in the inertial frame is also shown in \reffig{curved-wall-mean-errors}. The rapid convergence of the RMS inclination error to zero indicates that the Tool Center Point (TCP) successfully follows the commanded orientation $\bm{z}^b_{d}$ during the deposition.

It is worth to point out that, during the curved thin wall manufacturing, some adjustments were made regarding the torch orientation with respect to gravity. This was due to the fact that the torch could not satisfy the desired orientation without colliding with some of the screws holding the substrate to the base plate.

Although this issue was unintentional, the proposed framework allowed for an easy adjustment of the torch direction relative to gravity, highlighting another advantage of the method. To address this issue, the torch orientation with respect to $\F_{ab}$ has been changed to rotate $\pi/18$ ($10^\circ$) around $y$ axis, resulting in $z_{d}\approx[-0.17\quad 0\quad -0.98]\trans$.

\subsection{Intake funnel}
The final experiment consist of an intake funnel manufacturing. This is a common industrial piece to control airflow between piping segments and has a geometry that requires the two-axis movement of the workpiece.

The designed part is shown in a point cloud format in \reffig{intake-funnel-design}(a) and the part can be manufactured following the deposition scheme of \reffig{intake-funnel-design}(b). The deposition process is divided in two steps, the first one is the cylinder manufacturing while the second one refers to the bell-mouth section of the intake funnel part. Thus, for this part, the deposition process uses two different trajectories equations.
\begin{figure}[ht!]
    \centering
    \subfloat[]{\includegraphics[width=0.49\columnwidth,trim={0 0 0 0},clip]{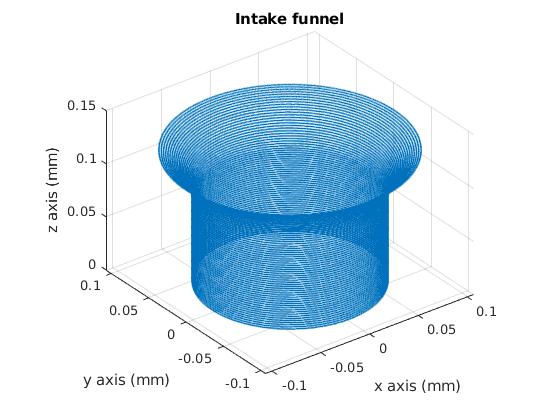}}\label{fig:intake-funnel-matlab}
    \subfloat[]{\includegraphics[width=0.49\columnwidth,trim={0 0 0 0},clip]{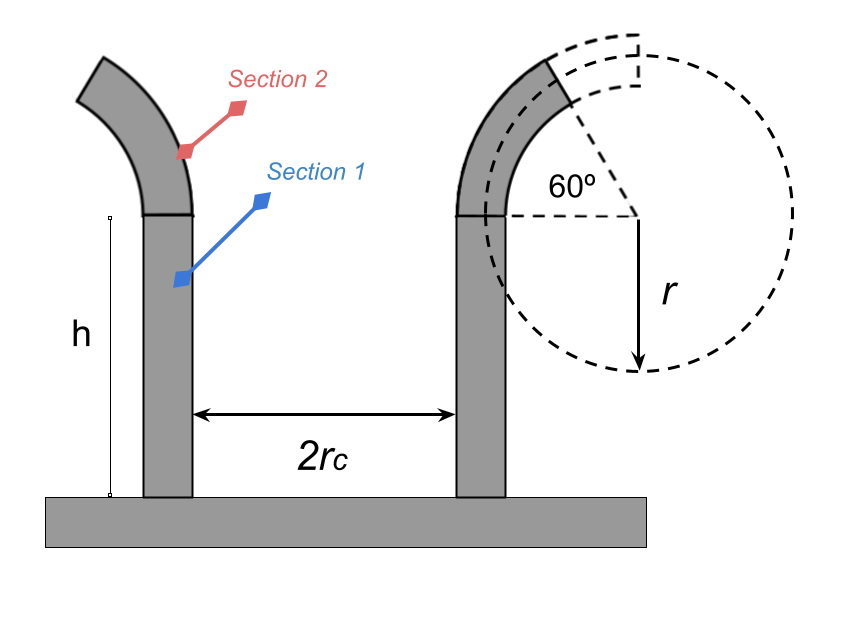}}\label{fig:intake-funnel-scheme}
    \caption{Intake funnel (a) model and (b) deposition scheme.}
    \label{fig:intake-funnel-design}
\end{figure}

The travel speed of $V_{ts}=5$\,mm/s produces layers with average height of $2.0$\,mm. The planned part consists of a cylinder with $r_c=80$\,mm radius and $h\approx 90$\,mm height (which requires the deposition of $45$ layers). For the second section has a $\pi/3$ ($60^\circ$) opening flare with a radius of $r=20$\,mm, requiring the deposition of $21$ layers as each layer height is planned to have $l_h=2$\,mm.

\subsubsection{\bf \textit{Cylinder deposition}}
The first deposition step to produce the cylinder follows the circular trajectory defined by:
\begin{subequations}
\begin{align}
    \bm{p}_d&=\begin{bmatrix}
        r_c\sin(\omega~t + \Delta\theta) + o_x \\-r_c\cos(\omega~t + \Delta\theta) + o_y\\n_l\ l_h
    \end{bmatrix}\\
    q_{d}&=\begin{bmatrix} -1 & 0 & 0 & 0
    \end{bmatrix}\trans
\end{align}
\end{subequations}
where $\omega = V_{ts}/r_c$ is the angular frequency of the circular trajectory in function of the travel speed $ V_{ts}=5$\,mm/s obtained by single bead tests, $r_c$ is the cylinder radius, $n_l$ is the layer number, $l_h$ is the layer height and $o_x$ and $o_y$ are the $x$ and $y$ axes position offsets which determine the cylinder center. The increment $\Delta\theta=n_l~\pi/36$ is chosen in order to add a phase lag for each new layer, i.e. to start the deposition $\pi/36$ ($5^\circ$) from the point where the previous layer started, this is a common strategy for closed shapes manufacturing to avoid material accumulation in the sewing points. Note that the given $q_d$ considers the torch frame $\F_t$ $z$-axis to be anti-parallel with $\F_d$ $z$-axis, i.e. normal to the deposition surface.

The deposition trajectory of some deposited layers for the cylinder section is shown in \reffig{cylinder-position-trajectory}. The deposition had an operational failure at the beginning of the $33^{rd}$ layer (lack of gas), which caused the deposition to stop soon after it has started. For this reason, a small adjustment was required justifying the phase lag between the $32^{nd}$ and $33^{rd}$ layers was increased.

\begin{figure}[ht!]
    \centering
    \includegraphics[width=0.42\textwidth,trim={0 0 0 0},clip]{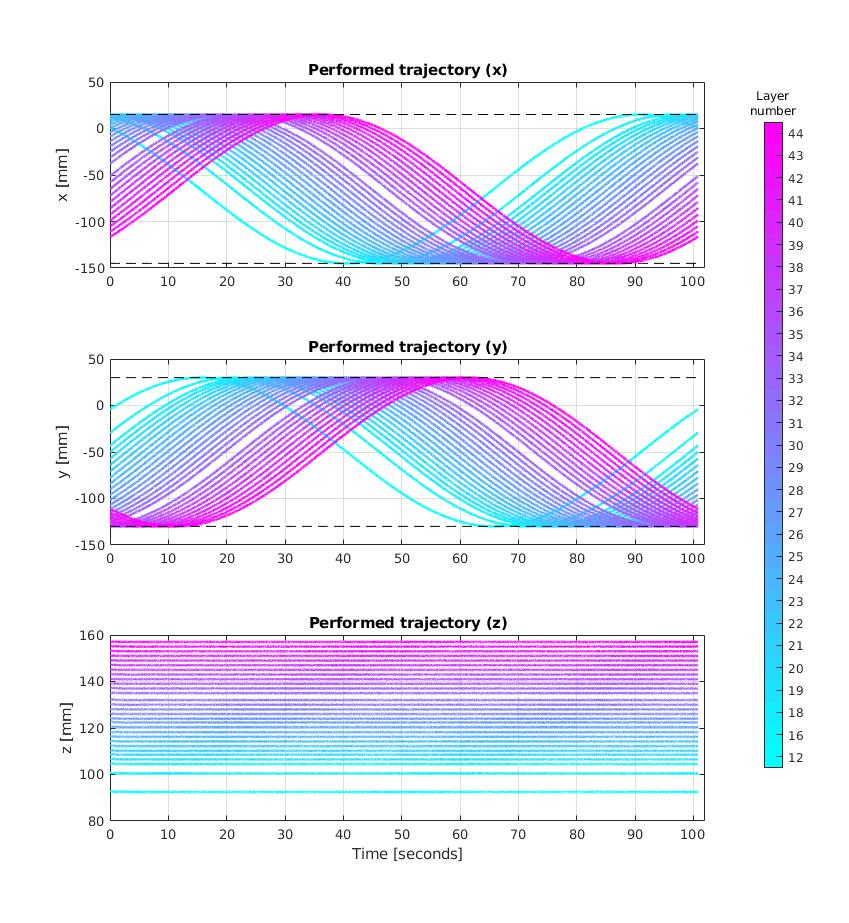}
    \caption{Intake funnel cylinder section trajectories}
    \label{fig:cylinder-position-trajectory}
\end{figure}

The position and orientation RMS error are shown in \reffig{cylinder-mean-errors}. Note that \reffig{cylinder-mean-errors} also shows the misalignment $\alpha$ between $z^b$ and $z^b_{d}$ RMS error. All errors are sufficiently small, signifying good trajectory tracking throughout the deposition.
\begin{figure}[ht!]
    \centering
    \includegraphics[width=0.4\textwidth,trim={0 0 0 0},clip]{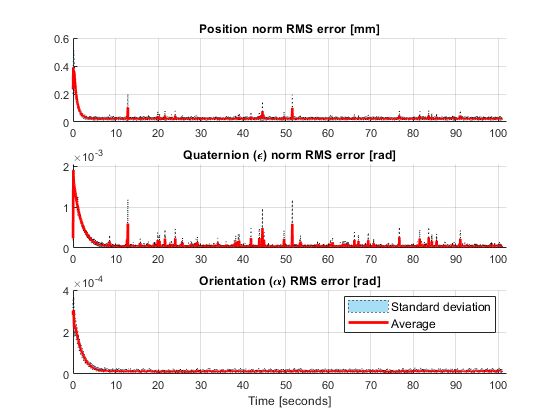}
    \caption{Position and orientation RMS errors and standard deviation for the cylinder section of the intake funnel}
    \label{fig:cylinder-mean-errors}
\end{figure}

\subsubsection{\bf \textit{Bell-mouth deposition}}
The second deposition step of the intake funnel manufacturing regards its curved section. This section consists of increasing size of the previous cylinder trajectory radius while the torch angle change for each layer, in a similar procedure made in \refsubsec{curved-thin-wall}.

The trajectory for this step is given by:
\begin{subequations}
\begin{align}
    r_{cn} &= r_c+(1-\cos(n_{l2}\Delta\gamma))r \\
    \bm{p}_d &= \begin{bmatrix}
        r_{cn}\sin(\omega~t + \Delta\theta_2) + o_x \\-r_{cn}\cos(\omega~t+\Delta\theta_2) + o_y\\ r \sin( n_{l2}\Delta\gamma)+ o_z
    \end{bmatrix}\\
    q_{d}&=\begin{bmatrix} \cos(n_{l2}\Delta\gamma/2) & 0 & \sin(n_{l2}\Delta\gamma/2) & 0
    \end{bmatrix}\trans
\end{align}
\end{subequations}
where $\Delta\theta_2=n_{l2}~\pi/36$ is the phase lag for this section, $n_{l2}$ is the layer number for this deposition step, not considering the deposited layers of the cylinder, $r_{cn}$ is the radius of the circular trajectory which changes for each new layer, $o_z$ is the offset along $z$-axis of $\F_d$. For this part, the total deposited cylinder height and $\Delta\gamma$ follows is calculated as in \refeq{delta_alpha}, but for a curvature of $\pi/3$ ($60^\circ$), i.e:
\begin{subequations}
    \label{eq:delta_alpha_intake_funnel}
\begin{align}
    \Delta\gamma&=\frac{\pi/3}{n_t}\\
    n_t&=r\frac{\pi}{3}\frac{1}{l_h}
\end{align}
\end{subequations}
where $n_t=21$ is the total number of layers necessary to produce the bell-mouth section. 

The performed $x$ and $y$ position trajectories for the bell-mouth section is shown in \reffig{intake-funnel-curved-trajectory}. Notice that the amplitude of the sinusoidal trajectory for the $x$ and $y$ axis increase with the number of layers, as it is dependent of the radius $r_{cn}$.
\begin{figure}[htb]
    \centering
    \includegraphics[width=0.5\textwidth,trim={0 0 0 0},clip]{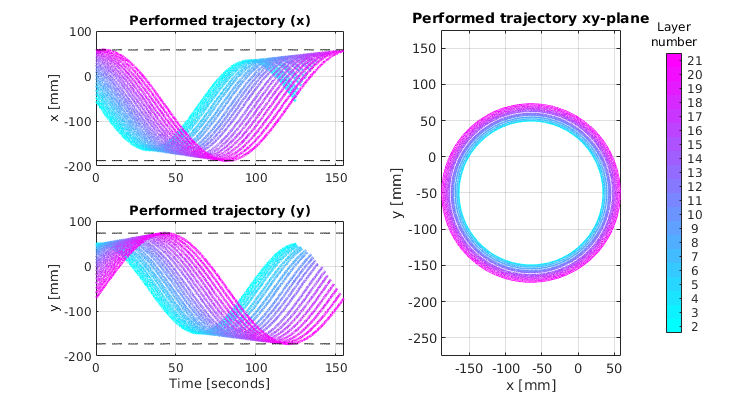}
    \caption{Intake funnel bell-mouth section $x$ and $y$ trajectories.}
    \label{fig:intake-funnel-curved-trajectory}
\end{figure}

Adjustments have been required regarding the $z$-axis trajectory and the $r_{cn}$ correction due to the wrong calibration of the positioning table and consequently the deposition frame $\F_d$ as well causing the deposition surface to misalignment with the torch. Another issue encountered was that the CTWD has not been included in the kinematic model of the robot which has also caused the torch to be too close to the deposition surface in the final layers. These adjustments have been made by correcting $r_c$ and $o_z$ to perform dry run tests before each deposition. The mean value of the performed trajectory for each layer is shown in \reffig{intake-funnel-curved-trajectory-z} alongside with the planned path.
\begin{figure}[ht!]
    \centering
    \includegraphics[width=0.4\textwidth,trim={0 0 0 0},clip]{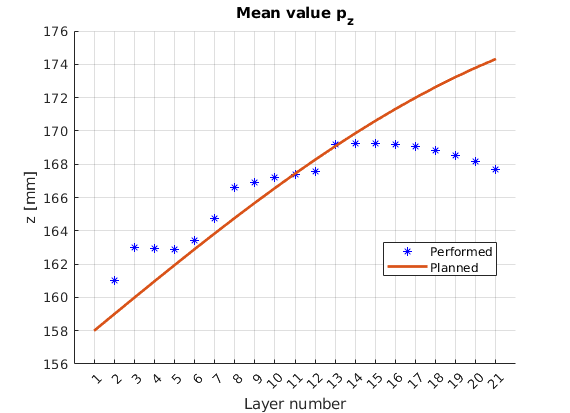}
    \caption{Intake funnel bell-mouth section $z$ trajectory mean value.}
    \label{fig:intake-funnel-curved-trajectory-z}
\end{figure}

The position and orientation RMS error are shown in \reffig{intake-funnel-curved-mean-errors}. Similar to previous experiments, errors were small enough not to compromise the deposition process.
\begin{figure}[ht!]
    \centering
    \includegraphics[width=0.4\textwidth,trim={0 0 0 0},clip]{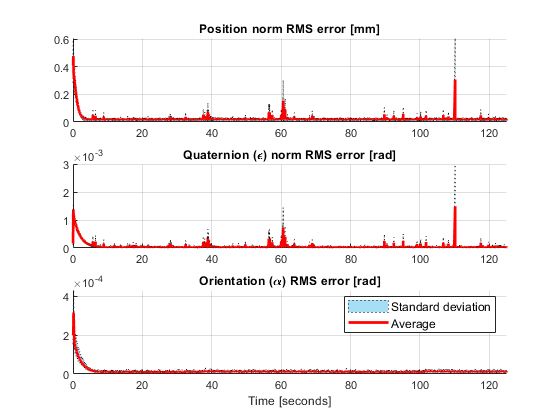}
    \caption{Position and orientation RMS errors and standard deviation for the bell-mouth section of the intake funnel.}
    \label{fig:intake-funnel-curved-mean-errors}
\end{figure}

The final produced part is shown in \reffig{intake-funnel}(a). The final part presented good symmetry, which can be attested by a 3D scan performed in the part and shown in \reffig{intake-funnel}(b).
\begin{figure}[ht!]
    \centering
    \subfloat[]{\includegraphics[width=0.38\columnwidth,trim={35cm 10cm 35cm 10cm},clip,angle=-90,origin=c]{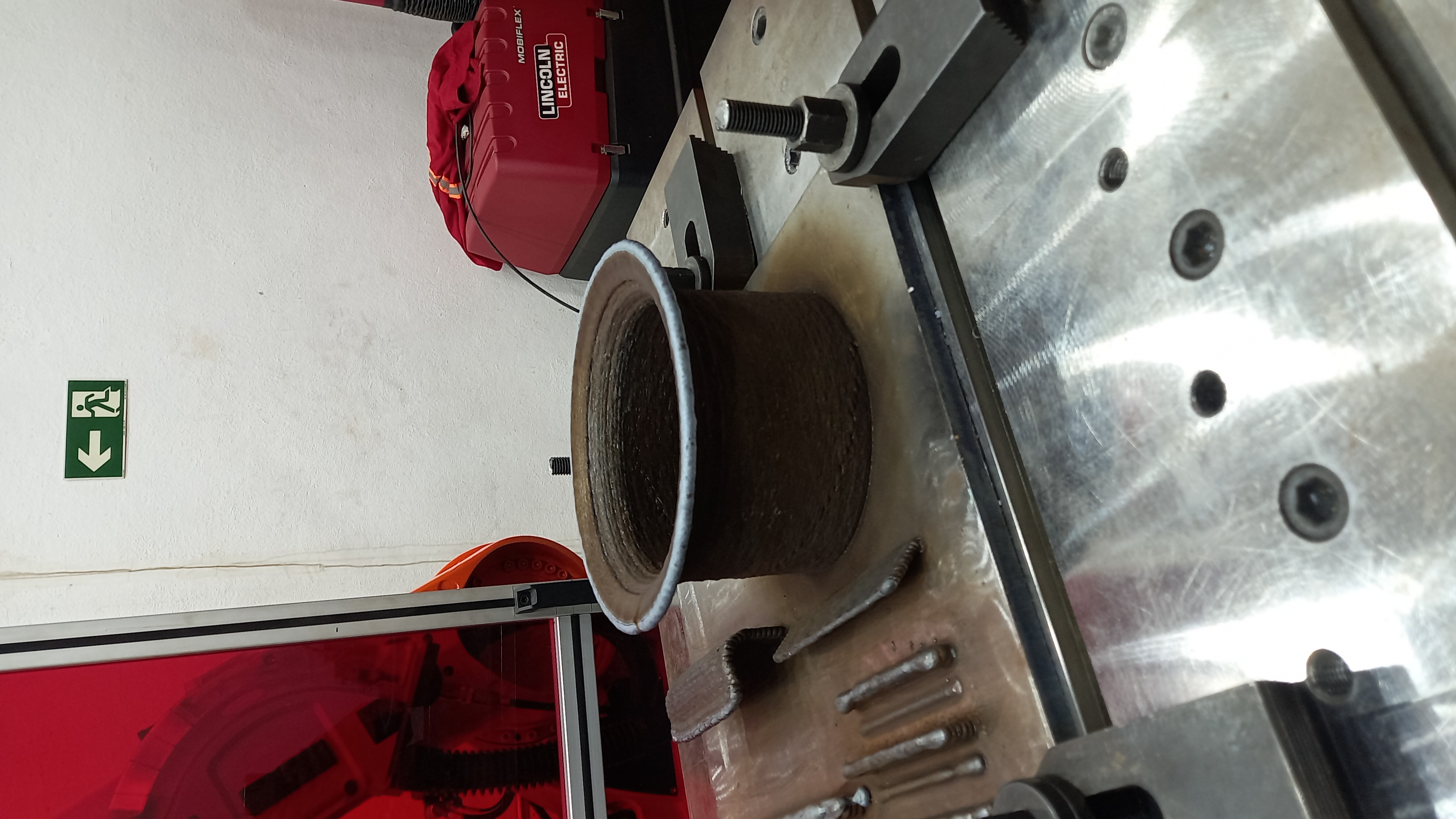}}
    \label{fig:intake-funnel-sub}
    \hspace{0.3cm}
    \raisebox{-0.11in}{\subfloat[]{\includegraphics[width=0.37\columnwidth,trim={9cm 2cm 8cm 2cm},clip]{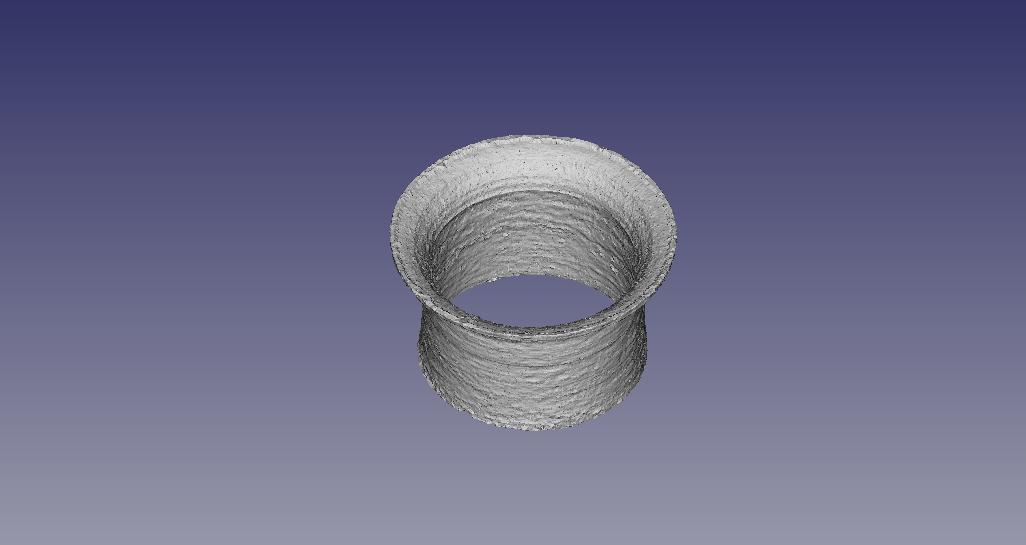}}}
    \label{fig:intake-funnel-scan}
    \caption{(a) Final manufactured intake funnel and (b) intake funnel profilometer scan.}
    \label{fig:intake-funnel}
\end{figure}
\section{Concluding Remarks and Perspectives}
\label{sec:conclusions}
This works addresses the online coordinated control problem for WAAM applications regarding the manufacturing of non-constant build direction parts. The implemented control algorithm proved to be a reliable alternative to the task-priority method proposed in \cite{Lizarralde2022}, but with a more general approach based on the task augmentation method. The asymptotic stability  of the closed-loop system considering the augmented Jacobian approach is demonstrated considering unmodeled dynamics inherent of the robot’s internal controllers.

An algorithm singularity is verified when deposition frame and inertial frame $z-$axes are aligned, a very common configuration for WAAM systems. This causes a redundancy between the deposition task and the TCP $z$-axis alignment task which induces the augmented Jacobian to lose rank. This singularity is handled by using the DLS method with numerical filtering without affecting the deposition task.

Future works include expressing the coordination control for WAAM as an optimization problem, using slack variables to ease the trajectory tracking of some degrees of freedom allowing the obstacle avoidance for depositions in non-regular substrate or cladding of cylindrical parts. The authors also intend to investigate the use of the strategy for higher redundancy levels systems aimed at hybrid manufacturing applications.

\appendices
\appendix{Proof of theorem \ref{theo:extended_jacobian}}
\label{sec:appendix}
Consider the Lyapunov candidate function
\begin{align*}
V \!=\! \frac{1}{2}{\bm{e}_{p}}\trans \bm{e}_{p} \!+\! (e_{\eta}\!-\!1)^2 \!+\! \bm{e}_{\bm{\epsilon}}\trans \bm{e}_{\bm{\epsilon}} \!+\! (e_{\eta s}\!-\!1)^2 \!+\! {\bm{e}_{s}}\trans \bm{e}_{s} \!\\
+ \alpha (L_m\!-\! \int_0^t \bm{\eta}\trans {J_A}\trans J_A \bm{\eta} d\tau)
\end{align*}

The time derivative of $V$ is given by:
\begin{align}
\dot V = \bm{e}_{p}\trans \dot{\bm{e}}_{p} + 2(e_{\eta}-1)\dot{e}_{\eta} +2\bm{e}_{\bm{\epsilon}}\trans\dot{\bm{e}}_{\bm{\epsilon}} + 2(e_{\eta s}-1)\dot{\bm{e}}_{s} \notag\\+ 2\bm{e}_{s}\trans\dot{\bm{e}}_{s} - \alpha \bm{\eta}\trans {J_A}\trans J_A \bm{\eta}
\end{align}

From the quaternion propagation error \refeq{propagation_constrained}:
\begin{equation*}
\begin{array}{lll}
    \dot{e}_{\eta} = -\frac{1}{2}{\bm{e}_{\bm{\epsilon}}}\trans \tilde{\bm{\omega}},&&
    \dot{\bm{e}}_{\bm{\epsilon}} = \frac{1}{2}e_{\eta}\tilde{\bm{\omega}}-\bm{e}_{\bm{\epsilon}}\times\tilde{\bm{\omega}},\\
    \dot{e}_{\eta s} = -\frac{1}{2}{\bm{e}_{s}}\trans \tilde{\bm{\omega}}_s,&&
    \dot{\bm{e}}_s\ = \frac{1}{2}\left(e_{\eta s}\tilde{\bm{\omega}}_s+\begin{bmatrix}
        e_{\epsilon_2 s} \\-e_{\epsilon_1 s}
    \end{bmatrix}\tilde{\omega}^b_z\right)
\end{array}
\end{equation*}

Therefore:
\begin{align}
\dot V = \bm{e}_{p}\trans \dot{\bm{e}}_{p} &+ 
\begin{bmatrix} (e_{\eta}\!-\!1)\\ \bm{e}_{\bm{\epsilon}}  \end{bmatrix}\trans 
\begin{bmatrix}
        -\bm{e}_{\bm{\epsilon}}\trans \\ e_{\eta} I - \hat{\bm{e}}_{\bm{\epsilon}}
\end{bmatrix} \tilde{\bm{\omega}} 
\nonumber\\&+%
\begin{bmatrix} (e_{\eta s}\!-\!1)\\ \bm{e}_{s}  \end{bmatrix}\trans 
\begin{bmatrix}
        -\bm{e}_{s}\trans \tilde{\bm{\omega}}_s \\ e_{\eta s}\tilde{\bm{\omega}}_s+\begin{bmatrix}
        e_{\epsilon_2 s} \\-e_{\epsilon_1 s}
    \end{bmatrix}\tilde{\omega}^t_z
\end{bmatrix} 
\nonumber\\&
- \alpha \bm{\eta}\trans {J_A}\trans J_A \bm{\eta}
\end{align}

Since $e_{\eta}\bm{e}_{\bm{\epsilon}}\trans \!=\! \bm{e}_{\bm{\epsilon}}\trans e_{\eta}$, 
$\bm{e}_{\bm{\epsilon}}\trans \hat{\bm{e}}_{\bm{\epsilon}} \!=\! 0$, 
$e_{\eta s}\bm{e}_{s}\trans \!=\! \bm{e}_{s}\trans e_{\eta s}$, 
$\bm{e}_{s}\trans \hat{\bm{e}}_{s}=0$
and $\bm{e}_s\trans[e_{\epsilon_2 s}\quad -e_{\epsilon_1 s}]\trans=0$ 
, then:
\begin{equation}
\dot V = \begin{bmatrix} \bm{e}_{p} \\ \bm{e}_{\bm{\epsilon}} \\\bm{e}_{s}  \end{bmatrix}\trans \begin{bmatrix} \dot{\bm{e}}_{p} \\ \tilde{\bm{\omega}} \\ \tilde{\bm{\omega}}_s \end{bmatrix}  - \alpha \bm{\eta}\trans J_A\trans J_A \bm{\eta}
\label{eq:lyapunov-dot2}
\end{equation}

Consequently, substituting \refeq{errordynamics} into \refeq{lyapunov-dot2}:
\begin{align}
\dot V = -\!\begin{bmatrix} \bm{e}_{p} \\ \bm{e}_{\bm{\epsilon}} \\ \bm{e}_{s}  \end{bmatrix}\trans \!
\begin{bmatrix} K_p & 0 & 0 \\ 0 & K_o & 0\\ 0 & 0 & K_s\end{bmatrix} \!
\begin{bmatrix} \bm{e}_{p} \\ \bm{e}_{\bm{\epsilon}} \\\bm{e}_{s}  \end{bmatrix}  \!\notag\\-\!
\begin{bmatrix} \bm{e}_{p} \\ \bm{e}_{\bm{\epsilon}}  \\\bm{e}_{s}\end{bmatrix}\trans\! J_A \bm{\eta}  \!-\! \alpha \bm{\eta}\trans J_A\trans J_A \bm{\eta}
\end{align}

Thus, using Young's inequality \cite{khalil02}, it is always possible to choose positive constants $k_1, k_2, k_3, k_4$ such that the following inequality is held:
\begin{equation}
\dot V \leq -k_1 \bm{e}_{p}\trans \bm{e}_{p} - k_2 \bm{e}_{\bm{\epsilon}}\trans \bm{e}_{\bm{\epsilon}} - k_3 \bm{e}_s\trans\bm{e}_s - k_4  \bm{\eta}\trans J\trans J \bm{\eta}  \leq 0   
\end{equation}
Therefore, $\dot{V} \!\leq\! 0$ 
and consequently by Lyapunov Theory \cite{khalil02},
$\bm{e}_{p}, e_q, \bm{e}_{s}, J\bm{\eta} \!\in\! \cal{L}_\infty$. 
Besides that, given that $\bm{\eta} \!\in\! \mathcal{L}_{2} \!\cap\! \cal{L}_\infty$,
then $J\bm{\eta} \!\in\! \mathcal{L}_{2} \cap \cal{L}_\infty$.
Given that $\bm{e}_{p}, e_q, \bm{e}_{s}, J\bm{\eta}$ are bounded, $\ddot V$ is also bounded, then $\dot V$ is uniformly continuous. 
By Barbalat's Lemma \cite{khalil02} it is possible to conclude that $\dot V \!\rightarrow\! 0$, and consequently $\bm{e}_{p}\! \rightarrow \!0$, $\bm{e}_{\bm{\epsilon}}\!\rightarrow\! 0,  e_{\eta} \!\rightarrow\! \pm 1$, $\bm{e}_{s}\!\rightarrow\! 0$ and $e_{\eta s} \!\rightarrow\! \pm 1$  for $t \!\rightarrow\! \infty$.

 \bibliographystyle{elsarticle-num} 

\end{document}